\pgfplotsset{compat=1.17}
\begin{document}

\theoremstyle{plain}
\newtheorem{theorem}{Theorem}[section]
\newtheorem{proposition}[theorem]{Proposition}
\newtheorem{lemma}[theorem]{Lemma}
\newtheorem{corollary}[theorem]{Corollary}
\theoremstyle{definition}
\newtheorem{definition}[theorem]{Definition}
\newtheorem{assumption}[theorem]{Assumption}
\theoremstyle{remark}
\newtheorem{remark}[theorem]{Remark}
\newtheorem{example}[theorem]{Example}

%

%

\twocolumn[

\aistatstitle{Uncertainty Estimates of Predictions via a General Bias-Variance Decomposition}

\aistatsauthor{ Sebastian G. Gruber \And Florian Buettner }

\aistatsaddress{ German Cancer Research Center (DKFZ) \\
  German Cancer Consortium (DKTK) \\
  Goethe University Frankfurt, Germany \\
  \texttt{sebastian.gruber@dkfz.de} \\ \And  German Cancer Research Center (DKFZ) \\
    German Cancer Consortium (DKTK) \\
    Frankfurt Cancer Institute, Germany \\
    Goethe University Frankfurt, Germany \\
   \texttt{florian.buettner@dkfz.de} \\ } ]

\begin{abstract}
    Reliably estimating the uncertainty of a prediction throughout the model lifecycle is crucial in many safety-critical applications.
    The most common way to measure this uncertainty is via the predicted confidence.
    While this tends to work well for in-domain samples, these estimates are unreliable under domain drift and restricted to classification.
    Alternatively, proper scores can be used for most predictive tasks but a bias-variance decomposition for model uncertainty does not exist in the current literature.
    In this work we introduce a general bias-variance decomposition for strictly proper scores, giving rise to the Bregman Information as the variance term.
    We discover how exponential families and the classification log-likelihood are special cases and provide novel formulations.
    Surprisingly, we can express the classification case purely in the logit space.
    We showcase the practical relevance of this decomposition on several downstream tasks, including model ensembles and confidence regions.
    Further, we demonstrate how different approximations of the instance-level Bregman Information allow out-of-distribution detection for all degrees of domain drift.
\end{abstract}

\section{INTRODUCTION}

A core principle behind the success of modern Machine and Deep Learning approaches are loss functions, which are used to optimize and compare the goodness-of-fit of predictive models. Typical loss functions, such as the Brier score or the negative log-likelihood, capture not only predictive power (in the sense of accuracy) but also predictive uncertainty. The latter is particularly relevant in sensitive forecasting domains, such as cancer diagnostics \citep{HAGGENMULLER2021202}, genotype-based disease prediction \citep{KatsaouniTashkandiWieseSchulz+2021+871+885} or climate prediction \citep{yen2019application}. \\
Proper scores are a common occurrence as loss functions for probabilistic modelling since their defining criterion is to assign the best value to the target distribution as prediction.
Consequently, they are widely applicable from quantile regression \citep{gneitingscores} to generative models \citep{song2021scorebased}.
They are a generalization of the log-likelihood and also cover exponential families \citep{grunwald2004game}.
However, for such loss functions, it is not clear how we can decompose them such that a component capturing predictive uncertainty arises.
Consequently, predictive uncertainty is typically only considered as variance of predictions or, in classification, via the confidence score associated to the top-label prediction.
Such confidence scores capture the predictive uncertainty well if they are calibrated, namely if the confidence of a prediction matches its true likelihood \citep{guo2017calibration}.
However, the calibration error of these confidence scores typically increases under domain drift, making them an unreliable measure for predictive uncertainty in many real-world applications \citep{ovadia2019can, tomani2021towards}. \\
In this work, we discover the Bregman Information as a natural replacement of model variance via a bias-variance decomposition for strictly proper scores.
The Bregman Information generalizes the variance of a random variable via a closed-form definition based on a generating function \citep{banerjee2005clustering}.
In the case of our decomposition, this generating function is a convex conjugate directly associated with the respective proper score.
The source code for the experiments is openly accessible at \url{https://github.com/MLO-lab/Uncertainty_Estimates_via_BVD}.
We summarize our \textbf{contributions} in the following:
\begin{itemize}
    \item In Section \ref{sec:gbvd}, we generalize relevant properties to functional Bregman divergences, which allows for deriving a bias-variance decomposition for strictly proper scores.
    Via Bregman Information, we give novel formulations for decompositions of exponential families and the classification log-likelihood in the logit space.
    \item We generalize the law of total variance to Bregman Information and show how ensemble predictions marginalize out a specific source of uncertainty in Section \ref{sec:ens}. 
    We also propose a general way to give confidence regions for predictions in Section \ref{sec:CI}.
    \item We showcase experiments on how typical classifiers differ in their Bregman Information in Section \ref{sec:exp}.
    There, we demonstrate that the Bregman Information can be a more meaningful measure of out-of-domain uncertainty compared to the confidence score in the case of corrupted CIFAR-10 and ImageNet (Figure \ref{fig:ood_cif10} and Algorithm \ref{alg:BI}).
\end{itemize}

\begin{figure}[t]
\vspace{.1in}
\centering
\includegraphics[width=\columnwidth]{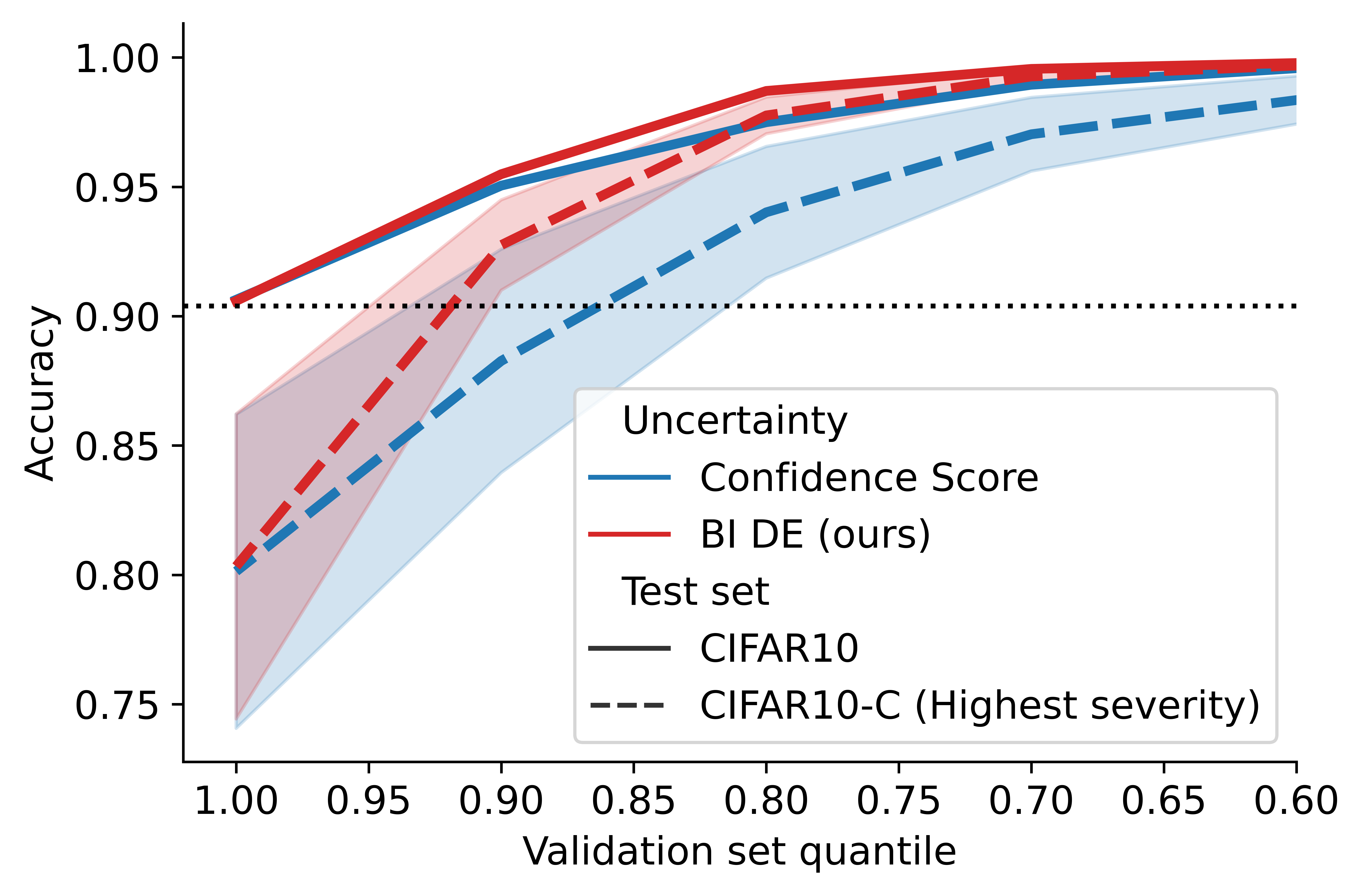}
\vspace{.1in}
\caption{
    Accuracy after discarding test instances with high levels of uncertainty. We can discard fewer samples to reach better accuracy when using Bregman Information as uncertainty measure. For example, to achieve the validation set accuracy (dotted line) for severely corrupted data we only have to discard $\sim$7\% of most uncertain in-domain samples contrary to $\sim$14\% when using the confidence score.
    The standard deviation bounds stem from different types of corruption.
}
\label{fig:ood_cif10}
\end{figure}

\section{BACKGROUND}
\label{sec:prel}

In this section, we first start with a basic introduction of Bregman divergences and Bregman Information.
We specifically mention recent developments for functional Bregman divergences as we will require and provide generalizations to this topic.
Then follows another introduction into the basic concepts of proper scores and exponential families, which are related to Bregman divergences.
Finally, we will discuss other proposed bias-variance decompositions in the literature to put our contribution into perspective.

\subsection{Bregman Divergences and Bregman Information}
\label{sec:every_breg}

Bregman divergences are a class of divergences occurring in a wide range of applications \citep{bregman1967relaxation, banerjee2005clustering, https://doi.org/10.48550/arxiv.cs/0611123, si2009bregman, gupta2022ensembles}.
We use the following definition.

\begin{definition}[\cite{bregman1967relaxation}]
    Let $\phi \colon U \to \mathbb{R}$ be a differentiable, convex function with $U \subset \mathbb{R}^d$.
    The \textbf{Bregman divergence} generated by $\phi$ of $x,y \in U$ is defined as
    \begin{equation*}
        d_\phi \left( x, y \right) = \phi \left( y \right) - \phi \left( x \right) - \left\langle \nabla \phi \left( x \right), y - x \right\rangle.
    \end{equation*}
\label{def:BD}
\end{definition}

It can be interpreted geometrically as the difference between $\phi$ and the supporting hyperplane of $\phi \left( x \right)$ at $y$.
We have $d_\phi \left( x, y \right) \geq 0$ with $d_\phi \left( x, y \right) = 0$ if $x=y$. \\
By definition, we can use Bregman divergences for scalar and vector inputs.
But, in the infinite-dimensional case, for example when dealing with a continuous distribution space $\mathcal{P}$, the gradient vector and the inner product are not defined anymore.
As a solution to this, \cite{https://doi.org/10.48550/arxiv.cs/0611123} introduce functional Bregman divergences by replacing the inner product term with the Fr\'echet derivative.
The authors showed that the functional case generalizes the standard case.
Since some relevant functions are not Fr\'echet differentiable, \cite{10.3150/16-BEJ857} offers an alternative approach to define the functional case based on subgradients.
In the context of dual vector spaces with pairing $\left\langle .,. \right\rangle$, a subgradient $x^\prime$ at point $x \in U$ of a convex function $\phi \colon U \to \mathbb{R}$ fulfills the property $\phi \left( y \right) \geq \phi \left( x \right) + \left\langle x^\prime, y - x \right\rangle$ for all $y \in U$.
A function $\phi^\prime \left( x \right)$ which maps to a subgradient of $\phi \left( x \right)$ for all $x \in U$ is called a selection of subgradients, or, if it is unambiguous in the context, just \textbf{subgradient} of $\phi$.
In general, subgradients are not unique, unlike gradients.
\cite{10.3150/16-BEJ857} proposes to use the vector space $\mathcal{L} \left( \mathcal{P} \right)$ of $\mathcal{P}$-integrable functions and the vector space $\mathrm{span} \mathcal{P}$ of finite linear combinations of elements from $\mathcal{P}$.
These spaces are dual with the pairing "$\cdot$" defined as $f \cdot P = \int f \mathrm{d} P$ for $f \in \mathcal{L} \left( \mathcal{P} \right)$ and $P \in \mathrm{span} \mathcal{P}$.
They proceed to define the by $\left(\phi, \phi^\prime \right)$ generated \textbf{functional Bregman divergence} as $d_{\phi, \phi^\prime} \left( x, y \right) = \phi \left( y \right) - \phi \left( x \right) - \phi^\prime \left( x \right) \cdot \left( y - x \right)$.
We will also use this definition for general vector spaces as long as a subgradient is defined.
In Section \ref{sec:gbvd}, we encounter the case when a subgradient $\phi^\prime_r$ of $\phi$ is only defined on a smaller domain $V \subset U$.
Then, we refer to $d_{\phi, \phi^\prime_r} \colon V \times U \to \mathbb{R}$ as a \textbf{restricted} functional Bregman divergence. \\
The \textbf{convex conjugate} $\phi^*$ of a function $\phi$ is defined as $\phi^* \left( x^* \right) = \sup_y \left\langle x^*, y \right\rangle - \phi \left( y \right)$ in the context of dual vector spaces \citep{zalinescu2002convex}.
If $\phi$ is differentiable and strictly convex, then 
$\left(\nabla \phi \right)^{-1} = \nabla \phi^*$ and $\phi^{**} = \phi$.
For this case, \cite{banerjee2005clustering} give the important fact that $d_\phi \left( x, y \right) = d_{\phi^*} \left( \nabla \phi \left( y \right), \nabla \phi \left( x \right)\right)$.
That is, by using the convex conjugate, we can flip the arguments in a Bregman divergence.
To derive our main contribution, we will state a generalization of this property to functional Bregman divergences in Lemma \ref{le:conj_breg}.

\begin{figure}
\vspace{.2in}
    \centering
    \resizebox{0.9\columnwidth}{!}{%
    \begin{tikzpicture}
        \datavisualization [school book axes,
        visualize as smooth line/.list={one},
        y axis={label={}, ticks={step=2}},
        x axis={label={$x$}},
        one={style={red, thick}}]
        data [set=one,format=function] {
            var x : interval [-3:4];
            func y = ln(1 + exp(\value x) );
        };
        

        \node at (2.7, 4) {$\ln \left( 1 + e^x \right)$};
        \draw [thick] (-2, 0.1269) -- (3, 3.0486);
        \draw [dashed, thick] (0.5, 1.5878) -- (3.3, 1.5878);
        \draw [dashed, thick] (0.5, 0.9741) -- (3.3, 0.9741);
        \draw [dashed, thick] (0.5, 1.5878) -- (0.5, -0.2);
        
        \node[inner sep=1pt,] at (0.5, -0.4) {$\mathbb{E} \left[ X \right]$};
        \node at (4.2, 1.29) {$\Big\} \; \mathbb{B}_{\sigma_+} \left[ X \right]$};
        
    \end{tikzpicture}
    }
\vspace{.2in}
\caption{Illustration of the Bregman Information generated by the softplus function $\sigma_+ \left(x \right) = \ln \left(1 + e^x \right)$ of a binary random variable.
}
\label{fig:B_LSE}
\end{figure}

We can also use Bregman divergences to quantify the variability or deviation of a random variable.
Throughout this work, the following definition is a central concept.

\begin{definition}[\cite{banerjee2005clustering}]
    Let $\phi \colon U \to \mathbb{R}$ be a differentiable, convex function.
    The \textbf{Bregman Information} (generated by $\phi$) of a random variable $X$ with realizations in $U$ is defined as
    \begin{equation*}
        \mathbb{B}_\phi \left[ X \right] = \mathbb{E} \left[ d_\phi \left( \mathbb{E} \left[ X \right], X \right) \right].
    \end{equation*}
\label{def:BI}
\end{definition}

The Bregman Information generalizes the variance of a random variable since both are equal if we set $U = \mathbb{R}$ and $\phi \left( x \right) = x^2$.
Thus, one interpretation of the Bregman Information is that it measures the divergence of a random variable from its mean.
Another representation, which does not depend on $d_\phi$, is $\mathbb{B}_\phi \left[ X \right] = \mathbb{E} \left[ \phi \left( X \right) \right] - \phi \left( \mathbb{E} \left[ X \right] \right)$.
\cite{banerjee2005clustering} show that this follows from the original definition.
Recall that Jensen's inequality gives $\mathbb{E} \left[ \phi \left( X \right) \right] \geq \phi \left( \mathbb{E} \left[ X \right] \right)$.
Consequently, a second interpretation of the Bregman Information is that it measures the gap between both sides of Jensen's inequality of the convex function $\phi$ and random variable $X$ \citep{banerjee2005clustering}.
It also shows that we do not require a subgradient for a generalization to the functional case.
Thus, we define the \textbf{functional Bregman Information} generated by a non-differentiable convex $\phi$ as $\mathbb{B}_\phi \left[ X \right] \coloneqq \mathbb{E} \left[ \phi \left( X \right) \right] - \phi \left( \mathbb{E} \left[ X \right] \right)$.
The Bregman Information generated by the \emph{softplus} function is depicted in Figure \ref{fig:B_LSE} for a binary random variable.
The softplus finds use as an activation function in neural networks \citep{glorot2011deep, pml1Book}.
Its generalization is the so-called LogSumExp function (c.f. Section \ref{sec:cll}). \\
In Section \ref{sec:gbvd}, the Bregman Information will play a critical role in our bias-variance decompositions since it represents the variance term.
Further, the LogSumExp-generated version covers the variance term for classification.
It reduces to the softplus version for the binary case.

\subsection{Proper Scores and Exponential Families}

\cite{gneitingscores} give an extensive and approachable overview of proper scores.
In short, proper scores put a negative loss on a distribution prediction $P \in \mathcal{P}$ for a target random variable $Y \sim Q \in \mathcal{P}$ and reach their maximum if $P = Q$.
For a concise statement of our main result, we require a more technical definition provided in the following similar to \cite{hendrickson1971proper}, \cite{ovcharov2015existence}, and \cite{10.3150/16-BEJ857}.
We call a function $S \colon \mathcal{P} \to \mathcal{L} \left( \mathcal{P} \right)$ \textbf{scoring rule} or just \textbf{score}.
Note that for a given $P$, $S \left( P \right)$ maps into a function space and can be again evaluated on an observation $y$, like $S \left( P \right) \left( y \right)$.
To assess the goodness-of-fit between distributions $P$ and $Q$, we use the expected score 
$S \left(P \right) \cdot Q = \mathbb{E}_{Y \sim Q} \left[ S \left(P \right) \left( Y \right) \right]$.
A score is defined to be \textbf{proper} on $\mathcal{P}$ if and only if
$S \left(P \right) \cdot Q \leq S \left(Q \right) \cdot Q$
holds for all $P, Q \in \mathcal{P}$,
and \textbf{strictly proper} if and only if an equality implies $P = Q$.
In other words, a score is proper if predicting the target distribution gives the best expectation and strictly proper if no other prediction can achieve this value.
Note that the choice of $\mathcal{P}$ is relevant: The negative squared error of a mean prediction is strictly proper for normal distributions with fixed variance but only proper if the variance varies.
Given a proper score, the associated \textbf{negative entropy} $G \colon \mathcal{P} \to \mathbb{R}$ is defined as
$G \left( Q \right) = S \left(Q \right) \cdot Q$.
It represents the highest reachable value for a given target.
If $\mathcal{P}$ is convex, the negative entropy has $S$ as a subgradient and is (strictly) convex if and only if $S$ is (strictly) proper.
For this case, \cite{10.3150/16-BEJ857} proved that a proper score is closely related to a functional Bregman divergence generated by the associated negative entropy via $G \left( Q \right) - S \left(P \right) \cdot Q = d_{G, S} \left( P, Q \right)$.
An example of such a relation is the Kullback-Leibler divergence and the Shannon entropy associated with the log score (log-likelihood).

\begin{table*}
  \caption{
  Examples of exponential families.
  The mapping defines the relation between natural parameters and typical parameters.
  We denote the dummy-encoding for a $x \in \left\{1, \dots, k \right\}$ with $\mathrm{d}_x$, class probabilities with $p_j$, mean with $\mu$, and standard deviation with $\sigma$.
  The Bernoulli distribution is a special case of the categorical distribution for $k=2$.
  }

  \label{tab:expfam}
  \centering
  \begin{tabular}{lllllll}
    \toprule
    Distribution              & $\mathcal{T}$ & $\Theta$ & $T \left( x \right)$ & $A \left( \theta \right)$ & $h \left( x \right)$ & Mapping \\
    \midrule
    Categorical ($k$-classes) & $\left\{1, \dots, k \right\}$ & $\mathbb{R}^{k-1}$ & $\mathrm{d}_x$ & $\ln \left( 1 + \sum_{i=1}^{k-1} \exp \theta_i \right)$ & $1$ & $p_j = \frac{\exp \theta_j}{1 + \sum_{i=1}^{k-1} \exp \theta_i}$ \\
    Normal (known $\sigma$) & $\mathbb{R}$ & $\mathbb{R}$ & $x$ & $\frac{\theta^2}{2}$ & $\frac{\exp{ \left( \frac{-x^2}{2 \sigma^2} \right)}}{\sqrt{2 \pi} \sigma}$ & $\mu = \theta \sigma$ \\
    \bottomrule
  \end{tabular}
\end{table*}

Next, we summarize relevant aspects of exponential families.
\cite{banerjee2005clustering} provides a more extensive introduction.
For a support set $\mathcal{T}$, the probability density/mass function $p_\theta$ at a point $x \in \mathcal{T}$ of an \textbf{exponential family} is given by $p_\theta \left( x \right) = \exp{ \left( \left\langle \theta, T \left( x \right) \right\rangle - A \left( \theta \right) \right)} h \left( x \right)$.
Here, we call $\theta \in \Theta$ the natural parameter of the convex parameter space $\Theta \subset \mathbb{R}^d$, $T$ is the sufficient statistic, and $A$ is the log-partition.
Table \ref{tab:expfam} gives two relevant examples and shows the mapping between typical and natural parameters.
Further examples are the Dirichlet, exponential, and Poisson distributions.
There are two relevant properties which we will require for our results.
One is that $A$ is a strictly convex function.
The other is $\mathbb{E} \left[ T \left( X \right) \right] = \nabla A \left( \theta \right)$ for $X \sim p_\theta$.
\cite{banerjee2005clustering} proved under mild conditions that an exponential family relates to a Bregman divergence and vice versa via the negative log-likelihood.
\cite{grunwald2004game} proved a similar link between proper scores and exponential families.

As we can see, exponential families, proper scores, and Bregman divergences have strong relationships to one another.
By generalizing some properties to the functional case, this relationship will allow us to state every variance term as (functional) Bregman Information.
In the case of the log-likelihood of an exponential family, the functional Bregman Information reduces to a vector-based Bregman Information.

\subsection{Other Bias-Variance Decompositions}

In general, all general decompositions in current literature are either for categorical, real-valued, or parametric predictions, and it is not clear if a decomposition for proper scores of non-parametric distributions is possible. \\
\cite{james1997generalizations} formulate a decomposition for any loss function of categorical or real-valued predictions but do not provide a closed-form solution for a given case.
\cite{domingos2000unified} introduce how a general bias-variance decomposition should look, though they stated it is unclear when or if this decomposition holds for a loss function.
\cite{james2003variance} provide a bias-variance decomposition for symmetric loss functions.
\cite{heskes1998bias} use the bias-variance decompositions for the Kullback-Leibler divergence, which allows to derive a decomposition for exponential families.
\cite{hansen2000general} proves that a bias-variance decomposition of a parametric prediction is only possible if the prediction belongs to an exponential family.
Importantly, they only introduce the specific decomposition for a given exponential family.
The decomposition is not formulated for the natural parameters and relies on the canonical link function.
Consequently, a relation to Bregman divergences and Bregman Information is missing, which we will provide. \\
A Pythagorean-like theorem for vector-based Bregman divergences is a known fact in literature \citep{jones1990general, csiszar1991least, della2002duality, dawid2007geometry, telgarsky2012agglomerative}.
An equality in this theorem implies a decomposition in the form of $\mathbb{E} \left[ d_\phi \left( X, y \right) \right] = \mathbb{E} \left[ d_\phi \left( X, x^* \right) \right] + d_\phi \left( x^*, y \right)$ with $x^* = \arg\min_z \mathbb{E} \left[ d_\phi \left( X, z \right) \right]$ \citep{pfau2013generalized}.
\cite{brofos2019bias}, \cite{brinda2019holder}, and \cite{yang2020rethinking} relate the classification log-likelihood to the Kullback-Leibler divergence and provide a bias-variance decomposition, where $X$ takes the form of a predictive probability vector.
They set $x^* \propto \exp \mathbb{E} \left[ \log X \right]$.
Note that predictions in the logit space require normalization to the log space, which will not be the case in our formulation.
\cite{gupta2022ensembles} build upon the Bregman divergence decomposition and use the notion of primal and dual space of the variance.
Even though the definitions are similar, the authors did not state the relation between Bregman Information and dual variance, for which they introduce a general law of total variance.

Due to the restriction of Bregman divergences to vector inputs, it is not clear if a decomposition for proper scores of non-parametric distributions is possible.
In other words, we require an extension of the current literature to functional Bregman divergences for a positive result.
In the following section, we provide the required generalization and unify the variance term in previous literature via the Bregman Information.

\section{A GENERAL BIAS-VARIANCE DECOMPOSITION}
\label{sec:gbvd}

In this section, we offer a general bias-variance decomposition for strictly proper scores.
The only assumptions are that the distribution set $\mathcal{P}$ is convex, the associated negative entropy is lower semicontinuous, and each respective expectation exists.
Further, we will discover that the variance term is the Bregman Information generated by the convex conjugate of the associated negative entropy.
This discovery generalizes and unifies decompositions in current literature for which exists a concrete form \citep{hansen2000general}, but also provides a closed formulation contrary to other general bias-variance decompositions \citep{james1997generalizations}.
All technical details and proofs are presented in Appendix \ref{app:proofs}.

\subsection{Functional Bregman Divergences of Convex Conjugates}
\label{sec:conj_breg}

The essential part for deriving our main result is the exchange of arguments in a functional Bregman divergence.
Note that a subgradient of a strictly convex function is injective.
Thus, its inverse exists on an appropriate domain, and this inverse is again a subgradient of the convex conjugate.
With that in mind, we can state the following.

\begin{lemma}
    Assume a strictly convex, lower semicontinuous function $G \colon \mathcal{P} \to \mathbb{R}$ has a subgradient $S$.
    Then, $d_{G^*, S^{-1}}$ is a restricted functional Bregman divergence
    with the properties
    \begin{itemize}
        \item $d_{G,S} \left( p, q \right) = d_{G^*, S^{-1}} \left( S \left( q \right), S \left( p \right) \right)$, and
        \item $d_{G^*, S^{-1}} \left( p^\prime, q^\prime \right) = d_{G,S} \left( S^{-1} \left( q^\prime \right), S^{-1} \left( p^\prime \right) \right)$.
    \end{itemize}
    For an appropriate random variable $Q^\prime$ we have
    \begin{equation*}
        \mathbb{E} \left[ d_{G^*, S^{-1}} \left( p^\prime, Q^\prime \right) \right] = \mathbb{B}_{G^*} \left[ Q^\prime \right] + d_{G^*, S^{-1}} \left( p^\prime, \mathbb{E} \left[ Q^\prime \right] \right).
    \end{equation*}
\label{le:conj_breg}
\end{lemma}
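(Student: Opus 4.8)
The plan is to reduce everything to the Fenchel--Young equality for subgradients together with the definition of the functional Bregman Information. First I would record the two structural facts that make $d_{G^*, S^{-1}}$ meaningful. Since $G$ is strictly convex, any selection of subgradients $S$ is injective, so $S^{-1}$ is well defined on the range $V' \coloneqq S(\mathcal{P})$; since $G$ is in addition lower semicontinuous, $G^{**} = G$, and the standard equivalence ``$x^\prime$ is a subgradient of $G$ at $x$ $\iff$ $x$ is a subgradient of $G^*$ at $x^\prime$'' shows that $S^{-1}$ restricted to $V'$ is a selection of subgradients of $G^*$. As $V'$ is in general a proper subset of the domain of $G^*$, this is precisely what is meant by $d_{G^*, S^{-1}}$ being a \emph{restricted} functional Bregman divergence, with signature $V' \times \mathrm{dom}\, G^* \to \mathbb{R}$. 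The identity I would isolate and reuse is that $S(p)$ being a subgradient of $G$ at $p$ is equivalent, by inserting the subgradient inequality into the supremum defining $G^*$, to the Fenchel--Young equality $G(p) + G^*(S(p)) = S(p) \cdot p$.

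For the argument-flip identity I would simply expand the claimed right-hand side: writing out $d_{G^*, S^{-1}}(S(q), S(p))$ from the definition of the functional Bregman divergence, using $S^{-1}(S(q)) = q$, and then substituting $G^*(S(p)) = S(p) \cdot p - G(p)$ and $G^*(S(q)) = S(q) \cdot q - G(q)$, the terms involving $S(q) \cdot q$ cancel and what remains rearranges to $G(q) - G(p) - S(p) \cdot (q - p) = d_{G,S}(p,q)$. The second bullet is the same identity read backwards: substituting $p \mapsto S^{-1}(q')$ and $q \mapsto S^{-1}(p')$ for $p', q' \in V'$ and using $S \circ S^{-1} = \mathrm{id}$ turns the first bullet into the second.

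For the decomposition I would expand $d_{G^*, S^{-1}}(p', Q')$ by its definition, take $\mathbb{E}$ over $Q'$ using linearity of expectation and of the pairing, and then add and subtract $G^*(\mathbb{E}[Q'])$. The first group of terms is $\mathbb{E}[G^*(Q')] - G^*(\mathbb{E}[Q']) = \mathbb{B}_{G^*}[Q']$ by the definition of the functional Bregman Information, and the second group is exactly $d_{G^*, S^{-1}}(p', \mathbb{E}[Q'])$. This is where ``appropriate random variable'' enters: $Q'$ and $\mathbb{E}[Q']$ should lie in $\mathrm{dom}\, G^*$, $p' \in V'$, and $\mathbb{E}[G^*(Q')]$ and $\mathbb{E}[S^{-1}(p') \cdot Q']$ should be finite so that the interchange of expectation and pairing is legitimate.

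The computations themselves are routine once the setup is fixed; the main obstacle is the bookkeeping of domains and the functional-analytic justification that $S^{-1}$ selects subgradients of $G^*$ — i.e. that biconjugation $G^{**} = G$ and the subgradient/conjugate equivalence survive in the $\mathcal{L}(\mathcal{P})$ / $\mathrm{span}\, \mathcal{P}$ dual pairing used here — together with the interchange of expectation and the pairing, which relies on the stated integrability hypotheses.
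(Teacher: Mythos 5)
Your proposal is correct and follows essentially the same route as the paper's proof: the Fenchel--Young equality $G(p) + G^*(S(p)) = S(p)\cdot p$, injectivity of $S$ from strict convexity so that $S^{-1}$ is a subgradient selection of $G^*$ on $S(\mathcal{P})$, a direct expansion for the argument-flip identities, and the add-and-subtract of $G^*(\mathbb{E}[Q^\prime])$ (with its finiteness secured by the stated integrability and domain hypotheses) for the decomposition. The only cosmetic difference is that you invoke the standard biconjugation/subgradient-of-the-conjugate equivalence where the paper verifies the same fact by a short direct computation.
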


The last property is a generalization of the decomposition of Bregman divergences combined with the Definition of Bregman Information.
Lemma \ref{le:conj_breg} confirms that a critical property of Bregman divergences is also well-defined for functional Bregman divergences.
Namely, we can exchange the arguments by changing the generating convex function to the convex conjugate in a dual space.
This insight leads us now to the main theoretical contribution of this work.

\subsection{A Decomposition for Proper Scores}

In the following, we present a bias-variance decomposition for strictly proper scores.
Note that we require no assumptions about the score or its entropy being differentiable.

\begin{theorem}
For a strictly proper score $S$ with associated lower semicontinuous negative entropy $G$, an estimated prediction $\hat{f}$, and the target $Y \sim Q$, we have

\begin{equation*}
\begin{split}
& \underbrace{\mathbb{E} \left[ - S ( \hat{f} ) ( Y ) \right]}_{\text{Error}} = \\
& \quad \underbrace{- G \left( Q \right)}_{\text{Noise}} + 
\underbrace{ \mathbb{B}_{G^*} \left[ S ( \hat{f} ) \right]}_{\text{"Variance"}} + \underbrace{ d_{G^*, S^{-1}} \left( S \left( Q \right), \mathbb{E} \left[ S ( \hat{f} ) \right] \right)}_{\text{Bias}}.
\end{split}
\end{equation*}
\label{th:scores_bvd}
\end{theorem}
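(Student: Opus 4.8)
The plan is to reduce the statement to two facts already established in the excerpt: the identity $G(Q) - S(P)\cdot Q = d_{G,S}(P,Q)$ linking a strictly proper score to a functional Bregman divergence generated by its negative entropy, and Lemma~\ref{le:conj_breg} on flipping the arguments of that divergence and decomposing it. Since $\mathcal{P}$ is convex and $S$ is strictly proper, $G$ is strictly convex and has $S$ as a subgradient, and $G$ is lower semicontinuous by hypothesis; hence every object appearing in the target equation (the conjugate $G^*$, the inverse subgradient $S^{-1}$, the restricted divergence $d_{G^*,S^{-1}}$, and the functional Bregman Information $\mathbb{B}_{G^*}$) is exactly what Lemma~\ref{le:conj_breg} supplies.

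First I would condition on the random prediction $\hat{f}\in\mathcal{P}$ and take the inner expectation over the target $Y\sim Q$ (independent of $\hat{f}$, as in the usual bias-variance setting). The proper-score identity with $P=\hat{f}$ gives
\begin{equation*}
\mathbb{E}\bigl[-S(\hat{f})(Y)\,\big|\,\hat{f}\bigr] = -S(\hat{f})\cdot Q = -G(Q) + d_{G,S}(\hat{f},Q),
\end{equation*}
and the law of total expectation then yields $\mathbb{E}[-S(\hat{f})(Y)] = -G(Q) + \mathbb{E}[d_{G,S}(\hat{f},Q)]$. Next I would move the randomness into the second slot of a divergence by applying the first identity of Lemma~\ref{le:conj_breg}, namely $d_{G,S}(\hat{f},Q) = d_{G^*,S^{-1}}(S(Q),S(\hat{f}))$. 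Finally, the decomposition identity of Lemma~\ref{le:conj_breg} with $p'=S(Q)$ and $Q'=S(\hat{f})$ gives
\begin{equation*}
\mathbb{E}\bigl[d_{G^*,S^{-1}}(S(Q),S(\hat{f}))\bigr] = \mathbb{B}_{G^*}\bigl[S(\hat{f})\bigr] + d_{G^*,S^{-1}}\bigl(S(Q),\mathbb{E}[S(\hat{f})]\bigr),
\end{equation*}
and substituting back produces precisely the three named terms (Noise, ``Variance'', Bias).

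The main obstacle is not the algebra but justifying the functional setting at each step: one must check that $S(\hat{f})$ is an ``appropriate'' random variable in the sense of Lemma~\ref{le:conj_breg} (measurable, with the integrability that makes all displayed expectations exist — a standing assumption of the theorem), that its realizations lie in the image of $S$ where $S^{-1}$ is defined and $G^{**}=G$, and — most delicately — that the mean $\mathbb{E}[S(\hat{f})]$ still lies in the effective domain of $G^*$, so that the \emph{restricted} functional Bregman divergence $d_{G^*,S^{-1}}\bigl(S(Q),\mathbb{E}[S(\hat{f})]\bigr)$ is well-defined even though $\mathbb{E}[S(\hat{f})]$ need not equal $S(P)$ for any $P\in\mathcal{P}$. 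Lower semicontinuity of $G$ (hence closedness of $G^*$) is what secures this last point; once it is in place, the decomposition follows immediately by chaining the proper-score identity with the two parts of Lemma~\ref{le:conj_breg} as above.
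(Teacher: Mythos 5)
Your proposal is correct and follows essentially the same route as the paper's own proof: apply the identity $S(P)\cdot Q = G(Q) - d_{G,S}(P,Q)$ with $P=\hat{f}$, take expectations, flip the arguments via the first identity of Lemma \ref{le:conj_breg}, and then apply its expectation decomposition with $p'=S(Q)$ and $Q'=S(\hat{f})$. The technical caveats you flag are also the ones the paper addresses, via the standing assumption $\mathbb{E}[S(\hat{f})]\in\mathrm{co}\left(S\left(\mathcal{P}\right)\right)$ together with Lemmas \ref{le:SubgOfConv} and \ref{le:conv_conj_E}.
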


As we can see, the variance term is always represented by the Bregman Information generated by the convex conjugate of the associated negative entropy.
The theorem directly follows from the relationship between proper scores and functional Bregman divergences provided by \cite{10.3150/16-BEJ857} in combination with Lemma \ref{le:conj_breg} (c.f. Appendix \ref{app:proofs}).
We provide an example in the form of the prominent log score.
For conciseness, we denote all distributions with their respective densities.

\begin{example}
    The log score $S = \ln$ is strictly proper on any set of densities. 
    Its negative entropy is the negative Shannon entropy $H \left( p \right) = \int p \left( x \right) \ln p \left( x \right) \mathrm{d}x$.
    The convex conjugate is the log partition function $H^* \left( p^* \right) = \ln \int e^{p^* \left( x \right)} \mathrm{d}x$.
    Since $\mathbb{E} \left[ H^* \left( \ln \hat{f} \right) \right] = 0$, we receive the Bregman Information in the form $\mathbb{B}_{H^*} \left[ S \left( \hat{f} \right) \right] = - H^* \left( \mathbb{E} \left[ \ln \hat{f} \right] \right)$.
\label{ex:log}
\end{example}

Dealing with non-parametric distributions can be challenging both in theory and practice.
Consequently, we provide the following restriction to exponential families and then express neatly the decomposition through the natural parameters.
Specifically, $\mathbb{B}_{H^*}$ will be reduced to a vector-based Bregman Information.

\subsection{Exponential Families as a Special Case}
\label{sec:exp_decomp}

When we deal with densities, it is often more practical to consider parametric densities since they can be represented by a parameter vector instead of a function.
Particularly relevant classes are exponential families for which one uses the log-likelihood to assess the goodness of fit.
In the last section, we derived the decomposition for the log-likelihood expressed in the function space in Example \ref{ex:log}.
Thus, we provide the following special case as a novel formulation of \citep{hansen2000general}.

\begin{proposition}
For an exponential family density/mass function $p_{\hat{\theta}}$ with estimated natural parameter vector $\hat{\theta}$, log-partition $A$, and reference function $h$, we have the log likelihood decomposition

\begin{equation*}
 \underbrace{\mathbb{E} \left[ - \ln p_{\hat{\theta}} \left( Y \right) \right]}_{\text{NLL}} = \underbrace{n \left( \theta \right)}_{\text{Noise}} + \underbrace{ \mathbb{B}_{A} \left[ \hat{\theta} \right]}_{\text{"Variance"}} + \underbrace{ d_{A} \left( \theta, \mathbb{E} \left[ \hat{\theta} \right] \right)}_{\text{Bias}}
\end{equation*}
where $n \left( \theta \right) = - A^* \left( \nabla A \left( \theta \right) \right) - \mathbb{E} \left[ \ln h \left( Y \right) \right]$ and $Y \sim p_\theta$.
\label{prop:exp_fam_decomp}
\end{proposition}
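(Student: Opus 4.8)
The plan is to obtain the exponential-family decomposition as a direct specialization of Theorem \ref{th:scores_bvd} (or equivalently Example \ref{ex:log}) combined with the standard correspondence between exponential families and vector-valued Bregman divergences from \cite{banerjee2005clustering}. The key observation is that although the log score lives in the infinite-dimensional function space, for an exponential family the prediction $p_{\hat\theta}$ is parametrized by the finite-dimensional vector $\hat\theta$, and we expect the functional Bregman Information $\mathbb{B}_{H^*}[\ln p_{\hat\theta}]$ to collapse onto the vector Bregman Information $\mathbb{B}_A[\hat\theta]$ generated by the log-partition $A$.

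First I would write $-\ln p_{\hat\theta}(Y) = -\langle \hat\theta, T(Y)\rangle + A(\hat\theta) - \ln h(Y)$ and take the expectation over $Y \sim p_\theta$, using $\mathbb{E}[T(Y)] = \nabla A(\theta)$ to get $\mathbb{E}[-\ln p_{\hat\theta}(Y)] = -\langle \hat\theta, \nabla A(\theta)\rangle + \mathbb{E}[A(\hat\theta)] - \mathbb{E}[\ln h(Y)]$, where the outer expectation is over the randomness of $\hat\theta$. Next I would recall the conjugate identity $A^*(\nabla A(\theta)) = \langle \theta, \nabla A(\theta)\rangle - A(\theta)$, valid since $A$ is strictly convex, which lets me substitute $\langle\hat\theta,\nabla A(\theta)\rangle$ in terms of $A^*$ and the Bregman divergence $d_A(\theta,\hat\theta)$. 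Concretely, by Definition \ref{def:BD}, $d_A(\theta,\hat\theta) = A(\hat\theta) - A(\theta) - \langle\nabla A(\theta), \hat\theta - \theta\rangle$, so $A(\hat\theta) - \langle\hat\theta,\nabla A(\theta)\rangle = d_A(\theta,\hat\theta) + A(\theta) - \langle\theta,\nabla A(\theta)\rangle = d_A(\theta,\hat\theta) - A^*(\nabla A(\theta))$. This already gives $\mathbb{E}[-\ln p_{\hat\theta}(Y)] = -A^*(\nabla A(\theta)) - \mathbb{E}[\ln h(Y)] + \mathbb{E}[d_A(\theta,\hat\theta)] = n(\theta) + \mathbb{E}[d_A(\theta,\hat\theta)]$.

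It then remains to split the term $\mathbb{E}[d_A(\theta,\hat\theta)]$ into variance plus bias. Here I would invoke the standard vector Bregman decomposition — the Pythagorean-type identity with optimal "center" $\mathbb{E}[\hat\theta]$, i.e. $\mathbb{E}[d_A(\hat\theta, z)] = \mathbb{B}_A[\hat\theta] + d_A(\mathbb{E}[\hat\theta], z)$ — but applied with the arguments in the order that matches the statement, namely $\mathbb{E}[d_A(\theta,\hat\theta)] = \mathbb{B}_A[\hat\theta] + d_A(\theta,\mathbb{E}[\hat\theta])$; the last equality follows from expanding $d_A(\theta,\hat\theta) = A(\hat\theta) - A(\theta) - \langle\nabla A(\theta),\hat\theta-\theta\rangle$, taking expectations, and recognizing $\mathbb{E}[A(\hat\theta)] - A(\mathbb{E}[\hat\theta]) = \mathbb{B}_A[\hat\theta]$ together with $A(\mathbb{E}[\hat\theta]) - A(\theta) - \langle\nabla A(\theta),\mathbb{E}[\hat\theta]-\theta\rangle = d_A(\theta,\mathbb{E}[\hat\theta])$. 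Combining the two displays yields the claimed three-term decomposition.

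Alternatively, and perhaps more in the spirit of the paper, one could derive this by specializing Example \ref{ex:log}: identify the restricted conjugate divergence $d_{H^*, S^{-1}}$ applied to log-densities of the exponential family with $d_A$ applied to natural parameters (using that $\ln p_\theta = \langle\theta, T(\cdot)\rangle - A(\theta) + \ln h$ is affine in $\theta$, so $H^*$ restricted to this affine slice reduces to $A$ up to the $h$-dependent shift), and check that $\mathbb{B}_{H^*}[\ln p_{\hat\theta}]$ reduces to $\mathbb{B}_A[\hat\theta]$ because the affine terms $-A(\theta)+\ln h$ cancel in the Bregman Information. I expect the main obstacle to be precisely this bookkeeping: carefully tracking how the additive $\ln h(Y)$ term and the normalization constant $A(\hat\theta)$ interact with the conjugate $H^*$, so that the $\ln h$ contribution lands cleanly inside the noise term $n(\theta)$ and nothing spurious leaks into the variance or bias. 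Once the identification $d_{H^*,S^{-1}} \leftrightarrow d_A$ on the exponential-family slice is established rigorously, the rest is the routine Bregman algebra above.
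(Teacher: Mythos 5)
Your primary argument is correct, and it reaches the result by a genuinely different and much more elementary route than the paper. You expand $-\ln p_{\hat\theta}(Y)$ explicitly, use $\mathbb{E}[T(Y)] = \nabla A(\theta)$ and the Fenchel--Young identity $A^*(\nabla A(\theta)) = \langle\theta,\nabla A(\theta)\rangle - A(\theta)$ to isolate $n(\theta) + \mathbb{E}[d_A(\theta,\hat\theta)]$, and then split the expected divergence as $\mathbb{B}_A[\hat\theta] + d_A(\theta,\mathbb{E}[\hat\theta])$ by adding and subtracting $A(\mathbb{E}[\hat\theta])$ (which works because $d_A(\theta,\cdot)$ is affine plus $A(\cdot)$ in its second slot); all of this checks out, assuming as the paper implicitly does that $\hat\theta$ and $Y$ are independent. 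The paper instead insists on deriving the proposition as a literal specialization of Theorem \ref{th:scores_bvd}: it works in the Radon--Nikodym formulation, computes $H^*(\mathbb{E}[\ln p_{\hat\theta}]) = A(\mathbb{E}[\hat\theta]) - \mathbb{E}[A(\hat\theta)]$ to show $\mathbb{B}_{H^*}[\ln p_{\hat\theta}] = \mathbb{B}_A[\hat\theta]$, verifies $S^{-1}$ via the Radon--Nikodym and Lebesgue differentiation theorems, and then reduces the functional bias term $d_{H^*,S^{-1}}$. What that longer route buys — and what your direct computation silently bypasses — is the term-by-term correspondence with the general theorem, and in particular the nontrivial observation that the correspondence is \emph{not} clean: the functional bias term equals the vector bias term \emph{plus} $\mathbb{E}[\ln h(Y)] + H(Q) - A^*(\mathbb{E}[T(Y)])$, so the functional bias and noise must be regrouped to produce the vector bias and noise, whereas the variance term alone reduces exactly. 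Your sketched alternative route in the last paragraph is essentially the paper's proof, and you correctly anticipate that the $\ln h$ and normalization bookkeeping is where the work lies; but since your elementary argument already establishes the stated identity in full, nothing further is needed to prove the proposition itself.
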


The proof in Appendix \ref{app:proofs} uses $\theta = \nabla A^* \left( \mathbb{E} \left[ T \left( Y \right) \right] \right)$ in the last step, where $T$ is the sufficient statistic.
This is a fundamental property of exponential families but only holds if the distribution assumption holds with the data.
Since this is usually not the case in practical scenarios, it is important to note that the decomposition still holds if we replace every $\theta$ with $\nabla A^* \left( \mathbb{E} \left[ T \left( Y \right) \right] \right)$.

\begin{example}
We can recover the textbook decomposition of the MSE by using $Y \sim \mathcal{N} \left( \mu, \sigma^2 \right)$ with unknown mean $\mu$ and known variance $\sigma^2$.
The necessary information is provided in Table \ref{tab:expfam}.
For the LHS in Proposition \ref{prop:exp_fam_decomp} we have $\mathbb{E} \left[ - \ln p_{\hat{\theta}} \left( Y \right) \right] = \frac{\mathbb{E} \left[ \left( Y - \hat{\mu} \right)^2 \right]}{2 \sigma^2} + \ln \sqrt{2 \pi}\sigma$.
And for the RHS, we get $n \left( \theta \right) = \frac{1}{2} + \ln \sqrt{2 \pi}\sigma$, $\mathbb{B}_{A} \left[ \hat{\theta} \right] = \frac{1}{2\sigma^2} \mathbb{V} \left[ \hat{\mu} \right]$, and $d_{A} \left( \theta, \mathbb{E} \left[ \hat{\theta} \right] \right) = \frac{\left(\mu - \mathbb{E} \left[ \hat{\mu} \right] \right)^2}{2 \sigma^2}$.
Finally, we subtract $\ln \sqrt{2 \pi}\sigma$ on both sides and then multiply with $2 \sigma^2$, resulting in
\begin{equation*}
\mathbb{E} \left[ \left( Y - \hat{\mu} \right)^2 \right] = \sigma^2 + \mathbb{V} \left[ \hat{\mu} \right] + \left(\mu - \mathbb{E} \left[ \hat{\mu} \right] \right)^2.
\end{equation*}
\label{ex:MSE_decomp}
\end{example}

\subsection{Classification Log-Likelihood as a Special Special Case}
\label{sec:cll}

Even though classification is a particularly relevant task for Deep Learning, the current literature states the log-likelihood decomposition via the log-probabilities \citep{brofos2019bias, brinda2019holder, yang2020rethinking}.
Since neural networks output logits, a normalization step is required.
This step is cumbersome to compute and hinders theoretical analysis.
In the following, we will construct the Bregman Information for classification via Proposition \ref{prop:exp_fam_decomp} similar to Example \ref{ex:MSE_decomp}.
Surprisingly, the Bregman Information measures the variability of the prediction in the logit space and does not require normalization to the log-probability space.

\begin{corollary}
For logit prediction $\hat{z} \in \mathbb{R}^k$ and target $Y \sim Q$ with $k$ classes, we have
\begin{equation*}
    \underbrace{\mathbb{E} \left[ - \ln \mathrm{sm}_Y \! \left( \hat{z} \right) \right]}_{\text{Classif. NLL}} \! = \!
    \underbrace{H \! \left( Q \right)}_{\text{Noise}} +
    \underbrace{ \mathbb{B}_{\mathrm{LSE}} \left[ \hat{z} \right]}_{\text{"Variance"}} + \underbrace{ d_{\mathrm{LSE}} \left( \mathrm{sm}^{-1} \! \left( Q \right), \mathbb{E} \left[ \hat{z} \right] \right)}_{\text{Bias}}
\end{equation*}
with the LogSumExp function $\mathrm{LSE} \left(x_1, \dots, x_n \right) = \ln \sum_{i=1}^n e^{x_i}$, the softmax function $\mathrm{sm} = \nabla \mathrm{LSE}$, and Shannon entropy $H$.
\label{cor:classifnll}
\end{corollary}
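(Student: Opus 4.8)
\emph{Proof proposal.} The plan is to obtain Corollary~\ref{cor:classifnll} from Proposition~\ref{prop:exp_fam_decomp} applied to the categorical exponential family (first row of Table~\ref{tab:expfam}), and then transport the resulting $(k-1)$-dimensional statement, phrased in the minimal natural parameter $\hat\theta\in\mathbb{R}^{k-1}$, into the overparametrised $k$-dimensional logit picture. Introduce the linear map $\Pi\colon\mathbb{R}^k\to\mathbb{R}^{k-1}$, $\Pi(z)_j = z_j - z_k$. For a logit vector $\hat z$ one checks directly that $\mathrm{sm}_y(\hat z) = p_{\hat\theta}(y)$ with $\hat\theta = \Pi(\hat z)$, so the left-hand sides agree; likewise, using the representative $\mathrm{sm}^{-1}(Q) = \ln Q$, we get $\Pi(\mathrm{sm}^{-1}(Q)) = (\ln(q_j/q_k))_{j<k} = \theta$, the natural parameter of $Q$. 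Note that \emph{every} distribution on $k$ classes is categorical, so the exponential-family hypothesis of Proposition~\ref{prop:exp_fam_decomp} holds automatically and the caveat about replacing $\theta$ by $\nabla A^*(\mathbb{E}[T(Y)])$ is vacuous here.

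For the noise term, $h\equiv 1$ gives $n(\theta) = -A^*(\nabla A(\theta)) = A(\theta) - \langle\theta,\nabla A(\theta)\rangle$; substituting $A(\theta) = -\ln q_k$, $\nabla A(\theta) = (q_1,\dots,q_{k-1})$ and $\theta_j = \ln(q_j/q_k)$ from Table~\ref{tab:expfam} and simplifying collapses this to $-\sum_{j=1}^k q_j\ln q_j = H(Q)$ (equivalently, $n(\theta)$ is the cross-entropy of the true prediction). For the variance and bias terms the key identity is that $\mathrm{LSE}$ is the log-partition pulled back along $\Pi$ up to an affine correction: by translation invariance $\mathrm{LSE}(z) = \mathrm{LSE}(z - z_k\mathbf{1}) + z_k = A(\Pi z) + \langle e_k, z\rangle$, i.e. $\mathrm{LSE} = A\circ\Pi + \ell$ with $\ell$ linear. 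I then invoke two elementary invariances: (i) adding an affine function to the generator changes neither the Bregman Information nor the Bregman divergence (by linearity of $\mathbb{E}$, and because the linear term in the gradient cancels in the divergence), so $\mathbb{B}_{\mathrm{LSE}} = \mathbb{B}_{A\circ\Pi}$ and $d_{\mathrm{LSE}} = d_{A\circ\Pi}$; and (ii) for the linear map $\Pi$, $\mathbb{B}_{A\circ\Pi}[\hat z] = \mathbb{B}_A[\Pi\hat z]$ since $\mathbb{E}[\Pi\hat z] = \Pi\mathbb{E}[\hat z]$, and $d_{A\circ\Pi}(x,y) = d_A(\Pi x,\Pi y)$ since $\nabla(A\circ\Pi)(x) = \Pi^\top\nabla A(\Pi x)$ and hence $\langle\Pi^\top\nabla A(\Pi x),\,y-x\rangle = \langle\nabla A(\Pi x),\,\Pi y-\Pi x\rangle$. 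Chaining these with the previous paragraph yields $\mathbb{B}_{\mathrm{LSE}}[\hat z] = \mathbb{B}_A[\hat\theta]$ and $d_{\mathrm{LSE}}(\mathrm{sm}^{-1}(Q),\mathbb{E}[\hat z]) = d_A(\theta,\mathbb{E}[\hat\theta])$, and substituting into Proposition~\ref{prop:exp_fam_decomp} gives the claim.

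I expect the only real obstacle to be careful bookkeeping around the overparametrisation. Since $\mathrm{LSE}$ is convex but not strictly convex, $\mathrm{sm}$ is not injective and $\mathrm{sm}^{-1}(Q)$ is defined only up to an additive multiple of $\mathbf{1}$; one must check that this ambiguity is immaterial. It is: the kernel of $\Pi$ is exactly $\mathrm{span}(\mathbf{1})$, and the first argument of $d_{\mathrm{LSE}}(\cdot,\mathbb{E}[\hat z])$ enters only through $\nabla\mathrm{LSE}(\mathrm{sm}^{-1}(Q)) = Q$ and through $\mathrm{LSE}(\ln Q) = 0$, both well-defined. Everything else is the routine Table~\ref{tab:expfam} computation for the noise term and the verification that $d_{A\circ\Pi}$ is a genuine (differentiable, convex) Bregman divergence in the sense of Definition~\ref{def:BD}.
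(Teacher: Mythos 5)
Your proposal is correct and takes essentially the same route as the paper: both specialize Proposition~\ref{prop:exp_fam_decomp} to the categorical family of Table~\ref{tab:expfam}, compute the noise term via the Fenchel--Young equality $A^*\left(\nabla A\left(\theta\right)\right)=\left\langle\theta,\nabla A\left(\theta\right)\right\rangle-A\left(\theta\right)$, and transport the $(k-1)$-dimensional statement to the logit space by quotienting out the $\mathbf{1}$-direction (your projection $\Pi$ is precisely the paper's equivalence classes $\left[\theta\right]$, and your well-definedness check for $\mathrm{sm}^{-1}$ matches the paper's choice of representative). The only difference is presentational: where you factor the translation through $\mathrm{LSE}=A\circ\Pi+\left\langle e_k,\cdot\right\rangle$ together with the affine and linear-composition invariances of Bregman divergences and Bregman Information, the paper performs the same cancellations by direct coordinate computation.
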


The proof in Appendix \ref{app:proofs} combines Proposition \ref{prop:exp_fam_decomp} with the properties for categorical distributions presented in Table \ref{tab:expfam}.
Note that $\mathrm{sm}^{-1}$ maps only into a $k-1$ dimensional subspace of $\mathbb{R}^k$. \\
The Bregman Information acting in the logit space is a convenient surprise for Deep Learning applications.
Almost all neural networks used for classification do not output probabilities but logits.
Consequently, we do not require normalizing the neural network outputs to compute the Bregman Information. \\
Further, we can express the Bregman Information in the binary classification case through the softplus function since $\mathbb{B}_{\mathrm{LSE}} \left[ \left( \hat{z}_1, \hat{z}_2 \right)^\intercal \right] = \mathbb{B}_{\sigma_+} \left[ \hat{z}_2 - \hat{z}_1 \right]$.
The Bregman Information generated by the softplus function is illustrated in Figure \ref{fig:B_LSE}.
We chose the depicted random variable such that the Jensen gap visualizes geometrically.

\subsection{Ensemble Predictions}
\label{sec:ens}

Using ensembles as predictions in Machine Learning is a central aspect of many successful architectures, like Random Forest \citep{breiman2001random}, XGBoost \citep{chen2016xgboost}, Deep Ensembles (DE) \citep{lakshminarayanan2017simple}, or Test-Time augmentation (TTA) \citep{wang2019aleatoric}.
DE and TTA show strong empirical results even though they do not sample the training data, unlike Random Forests or XGBoost.
\cite{NEURIPS2020_7d420e2b} use the law of total variance to study the effect of different noise sources and \cite{gupta2022ensembles} generalize this law to vector-based Bregman divergences.
In this section, we show that an equivalent law also holds for (functional) Bregman Information and use it to compare finite sized ensembles.
To do so, we require a definition for conditional Bregman Information based on \cite{banerjee2004optimal}, which also includes the functional case.

\begin{definition}
    Let $\phi \colon U \to \mathbb{R}$ be a convex function.
    We define the (functional) \textbf{conditional Bregman Information} (generated by $\phi$) of a random variable $X$ with realizations in $U$ given another random variable $Y$ as
    \begin{equation*}
        \mathbb{B}_\phi \left[ X \mid Y \right] = \mathbb{E} \left[ \phi \left( X \right) \mid Y \right] - \phi \left( \mathbb{E} \left[ X \mid Y \right] \right).
    \end{equation*}
\label{def:cBI}
\end{definition}

Similar to Bregman Information, it holds that $\mathbb{B}_\phi \left[ X \mid Y \right] = \mathbb{E} \left[ d_{\phi} \left( \mathbb{E} \left[ X \mid Y \right], X \right) \mid Y \right]$ for differentiable $\phi$ \citep{banerjee2004optimal}.
The conditional variance appears by setting $\phi \left( x \right) = x^2$.
We can now contribute the following.

\begin{proposition}[Properties of Bregman Information]
The general law of total variance for a (functional) Bregman Information $\mathbb{B}_G$ and random variables $X$ and $Y$ is given by
\begin{equation}
    \mathbb{B}_G \left[ X \right] = \mathbb{E} \left[ \mathbb{B}_G \left[ X \mid Y \right] \right] + \mathbb{B}_G \left[ \mathbb{E} \left[ X \mid Y \right] \right].
\label{eq:tv}
\end{equation}
For i.i.d. random variables $X_1, \dots, X_{2^n}$ and (strictly) convex $G$, we have 
\begin{equation}
    \mathbb{B}_G \left[ \frac{1}{2^n} \sum_{i=1}^{2^n} X_i \right] \overset{(<)}{\leq} \mathbb{B}_G \left[ \frac{1}{2^{n-1}} \sum_{i=1}^{2^{n-1}} X_i \right],
\label{eq:nv_ens}
\end{equation}

and if $G$ is also continuous with $n \to \infty$ then 
\begin{equation}
    \mathbb{B}_G \left[ \frac{1}{n} \sum_{i=1}^n X_i \right] \overset{a.s.}{\longrightarrow} 0.
\label{eq:limes}
\end{equation} 
\label{prop:BI_props}
\end{proposition}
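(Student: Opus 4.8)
The plan is to prove the three claims in order, each building on standard convexity facts and the alternative representation $\mathbb{B}_G[X] = \mathbb{E}[G(X)] - G(\mathbb{E}[X])$.

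\textbf{The law of total variance \eqref{eq:tv}.} I would start from the tower property of conditional expectation. Writing $\mathbb{B}_G[X] = \mathbb{E}[G(X)] - G(\mathbb{E}[X])$ and inserting $\mathbb{E}[G(X)] = \mathbb{E}[\mathbb{E}[G(X) \mid Y]]$, I add and subtract $\mathbb{E}[G(\mathbb{E}[X \mid Y])]$. The term $\mathbb{E}[\mathbb{E}[G(X)\mid Y]] - \mathbb{E}[G(\mathbb{E}[X\mid Y])] = \mathbb{E}[\mathbb{B}_G[X \mid Y]]$ by Definition \ref{def:cBI}. The remaining term $\mathbb{E}[G(\mathbb{E}[X \mid Y])] - G(\mathbb{E}[X])$ equals $\mathbb{B}_G[\mathbb{E}[X \mid Y]]$ once we note $\mathbb{E}[\mathbb{E}[X \mid Y]] = \mathbb{E}[X]$, again by the tower property. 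This is essentially a one-line manipulation and needs only that all relevant expectations exist; it applies verbatim in the functional case since it never uses a subgradient.

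\textbf{Monotonicity under ensemble doubling \eqref{eq:nv_ens}.} The key observation is that $\frac{1}{2^n}\sum_{i=1}^{2^n} X_i = \frac{1}{2}\big(\bar{A} + \bar{B}\big)$ where $\bar{A} = \frac{1}{2^{n-1}}\sum_{i=1}^{2^{n-1}} X_i$ and $\bar{B} = \frac{1}{2^{n-1}}\sum_{i=2^{n-1}+1}^{2^n} X_i$ are i.i.d.\ copies of each other, each a mean of $2^{n-1}$ of the $X_i$. I would apply \eqref{eq:tv} with $X = \frac{1}{2}(\bar A + \bar B)$ conditioned on $Y = \bar A$ — or more directly use the midpoint-convexity estimate: by convexity of $G$, $G\!\left(\frac{\bar A + \bar B}{2}\right) \le \frac{1}{2}G(\bar A) + \frac{1}{2}G(\bar B)$, so taking expectations and subtracting $G(\mathbb{E}[X])$ (which equals $G(\mathbb{E}[\bar A])$) gives $\mathbb{B}_G\!\left[\frac{\bar A + \bar B}{2}\right] \le \frac{1}{2}\mathbb{B}_G[\bar A] + \frac{1}{2}\mathbb{B}_G[\bar B] = \mathbb{B}_G[\bar A]$, using that $\bar A, \bar B$ are identically distributed. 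For the strict inequality when $G$ is strictly convex, I need $\bar A \ne \bar B$ with positive probability, which holds whenever the $X_i$ are nondegenerate (if they are a.s.\ constant both sides are $0$ and the non-strict statement still holds); I would spell out this nondegeneracy caveat.

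\textbf{Almost-sure convergence \eqref{eq:limes}.} Here I would invoke the strong law of large numbers: $\frac{1}{n}\sum_{i=1}^n X_i \to \mathbb{E}[X_1]$ almost surely (valid in $\mathbb{R}^d$ componentwise, and the paper's $X$ live in a finite-dimensional $U$ in the cases of interest). Then $\mathbb{B}_G\!\left[\frac{1}{n}\sum X_i\right] = \mathbb{E}\!\left[G\!\left(\frac{1}{n}\sum X_i\right)\right] - G(\mathbb{E}[X_1])$; the subtracted term is constant, and continuity of $G$ gives $G\!\left(\frac{1}{n}\sum X_i\right) \to G(\mathbb{E}[X_1])$ a.s. The subtlety is that \eqref{eq:limes} as written is a statement about the \emph{deterministic} sequence $\mathbb{B}_G[\cdot]$, so "$\overset{a.s.}{\longrightarrow}$" must be interpreted carefully — I suspect the intended reading is that $\mathbb{B}_G$ of the empirical mean, viewed either as a sequence of numbers via $\mathbb{E}[G(\cdot)] - G(\mathbb{E}[\cdot])$ or as the random quantity $d_G(\mathbb{E}[X_1], \frac1n\sum X_i)$ inside the outer expectation, tends to $0$; I would state which and, if it is the expectation version, additionally justify passing the limit through $\mathbb{E}$ (dominated convergence, using the monotonicity \eqref{eq:nv_ens} along the subsequence $n = 2^k$ to get a uniform bound, or a local-boundedness argument near $\mathbb{E}[X_1]$). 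Clarifying and discharging this interchange-of-limit-and-expectation point is the main obstacle; the rest is routine.
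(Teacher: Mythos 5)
Your proposal follows essentially the same route as the paper for all three parts: the tower-property add-and-subtract for the law of total variance, the midpoint split $\frac{1}{2^n}\sum_{i=1}^{2^n}X_i = \tfrac12(\bar A + \bar B)$ with Jensen and the i.i.d.\ assumption for the doubling inequality, and the strong law of large numbers plus continuity of $G$ for the limit. The two caveats you flag are real but are also present in the paper's own proof: the paper applies ``Jensen's inequality for strict convexity'' without stating the nondegeneracy condition you identify, and it resolves your interchange-of-limit-and-expectation worry by \emph{not} taking the outer expectation at all --- it reads \eqref{eq:limes} as the almost-sure convergence of the random quantity $\phi\left(\frac1n\sum_{i=1}^n X_i\right) - \phi\left(\mathbb{E}\left[X_1\right]\right)$ to zero (hence the ``a.s.'' decoration), so no dominated-convergence argument is attempted or needed under that reading.
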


The last two properties also hold in the case of conditional Bregman Information. \\
We now apply Corollary \ref{prop:BI_props} in the context of ensemble predictions.
To simplify the argument, we assume the context of an exponential family. But, the statements also hold for non-parametric scenarios.
Let a prediction $\theta_{W, D}$ in the natural parameter space depend on $W$ and $D$ as two sources of variability.
In the case of Deep Ensembles, $D$ is the training data, and $W$ is the weight initialization.
For TTA, $W$ is the input variation, like angle or rotation.
If we compute an ensemble prediction $\hat{\theta}_{D}^{(n)} = 2^{-n} \sum_{i=1}^{2^n} \theta_{W_i,D}$
by sampling $W$ while keeping $D$ fixed, we marginalize out the uncertainty in the prediction stemming from $W$, since if $n \to \infty$, then
\begin{equation}
    \mathbb{B}_{A} \left[ \hat{\theta}_{D}^{(n)} \right] = \mathbb{B}_{A} \left[ \mathbb{E} \left[ \theta_{W,D} \mid D \right] \right] + \underbrace{\mathbb{E} \left[ \mathbb{B}_{A} \left[ \hat{\theta}_{D}^{(n)} \mid D \right] \right]}_{\longrightarrow 0}.
\end{equation}

As $n$ does not influence the bias term, the expected negative log-likelihood reduces by the amount of conditional Bregman Information of $W$.
Together with Equation \eqref{eq:nv_ens}, this results in

\begin{equation}
    \mathbb{E} \left[ - \ln p_{\hat{\theta}_{D}^{(n)}} \left( Y \right) \right] < \mathbb{E} \left[ - \ln p_{\hat{\theta}_{D}^{(n-1)}} \left( Y \right) \right].
\label{eq:ens_improv}
\end{equation}
For $n=0$ we recover the case of a single prediction.
It follows that an ensemble always has better expected performance than a single model as long as the ensemble members are generated by a true source of uncertainty.

\begin{figure}
\vspace{.2in}
    \centering
    \resizebox{0.9\columnwidth}{!}{%
    \begin{tikzpicture}
        \datavisualization [school book axes,
        visualize as smooth line/.list={one},
        y axis={label={}, ticks={step=2}},
        x axis={label={$x$}, ticks=few},
        one={style={red, thick}}]
        data [set=one,format=function] {
            var x : interval [-3:4];
            func y = ln(1 + exp(\value x) );
        };
        \node at (2.5, 3.75) {$\ln \left( 1 + e^x \right)$};
        \draw [thick] (-2, 0.1269) -- (3, 3.0486);
        \draw [thick] (-2, -0.4868) -- (3, 2.4349);
        
        \draw [dashed, thick] (3, -0.6) -- (3, 3.0486);
        \draw [dashed, thick] (-2, -0.6) -- (-2, 0.1269);
        \draw [decorate, decoration = {brace,mirror}] (-2, -0.7) --  (3, -0.7);
        \node at (0.5, -1) {$\left(1 - \alpha \right) \cdot 100\%$ CI};

        \draw [dashed, thick] (0.5, 0.9741) -- (0.5, -0.2);
        \node[inner sep=1pt,] at (0.5, -0.4) {$\mathbb{E} \left[ \hat{z} \right]$};
        \node at (3.95, 2.72) {$\Big\} = \frac{1}{\alpha} \mathbb{B}_{\sigma_+} \left[ \hat{z} \right]$};
    \end{tikzpicture}
    }
\vspace{.2in}
\caption{
Illustration of the confidence interval for binary classification.
The CI covers the area where the shifted tangent at $\mathbb{E} \left[ \hat{z} \right]$ is larger than the generating convex function.
This illustration generalizes to higher dimensions, where the CI is not an interval anymore in the strict sense but a region.
}
\label{fig:CI_BI}
\end{figure}

\subsection{Confidence Regions Based on Bregman Information}
\label{sec:CI}

In practical applications, one might be interested in a confidence interval for a prediction.
For example, we could ask for an interval that covers a given prediction in 95\% of the cases concerning the randomness in the training data.
If we have a high-dimensional or functional prediction, we are not using an interval anymore but a convex set.
Consequently, we refer to it as a confidence region.
We can construct such a region as in the following.
Applying Markov's inequality to the Bregman divergence $d_G$ between a mean and its random variable $\mathcal{X}$ gives $\mathbb{P} \left( d_G \left( \mathbb{E} \left[ X \right], X \right) \geq c \right) \leq \frac{1}{c} \mathbb{B}_G \left[ X \right]$. \\ 
Setting $c = \frac{1}{\alpha} \mathbb{B}_G \left[ X \right]$ results in $d_G \left( \mathbb{E} \left[ X \right], X \right) \leq \frac{1}{\alpha} \mathbb{B}_G \left[ X \right]$ having at least probability $\left(1 - \alpha \right)$.
Consequently, we are given a $\left(1 - \alpha \right) \cdot 100\%$-confidence region by
\begin{equation}
    \mathrm{CR}_{\left(1 - \alpha \right)} = \left\{ x \in \mathcal{X} \mid d_G \left( \mathbb{E} \left[ X \right], x \right) \leq \frac{ \mathbb{B}_G \left[ X \right]}{\alpha} \right\}
\end{equation}
with support $\mathcal{X}$ of $X$.
An illustration for the binary classification case is depicted in Figure \ref{fig:CI_BI}, where we construct a confidence interval in the one-dimensional logit space. \\
Further, we demonstrate the confidence regions of a neural network trained on the Iris dataset.
We estimate the Bregman Information by training models with different weight initializations.
The resulting regions in Figure \ref{fig:iris} illustrate the influence of the weight initialization on the variability of the prediction.

\begin{figure}
\vspace{.2in}
\centerline{\includegraphics[width=0.8\linewidth]{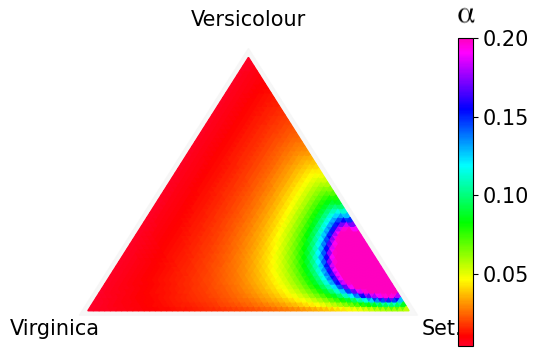}}
\vspace{.25in}
\caption{
Confidence regions of a prediction transformed into the simplex for various alphas.
The model is a simple neural network fitted on the Iris dataset.
}
\label{fig:iris}
\end{figure}

\begin{figure*}
\vspace{.1in}
\centerline{\includegraphics[width=\linewidth]{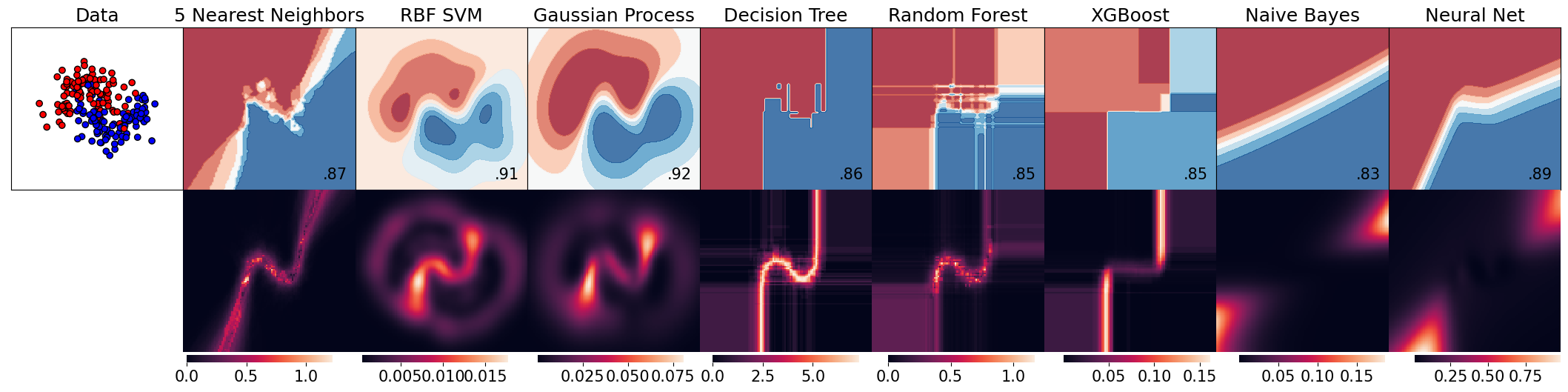}}
\vspace{.2in}
\caption{
Several classifiers are trained on a simulated toy task.
\textbf{Top}: Classifier predictions for a frame of the input space. The accuracy is displayed in each bottom right corner.
\textbf{Bottom}: Bregman Information of these classifiers in the identical space based on multiple training runs.
}
\label{fig:BI_toy_0}
\end{figure*}

\begin{figure}
\vspace{.1in}
\centerline{\includegraphics[width=\columnwidth]{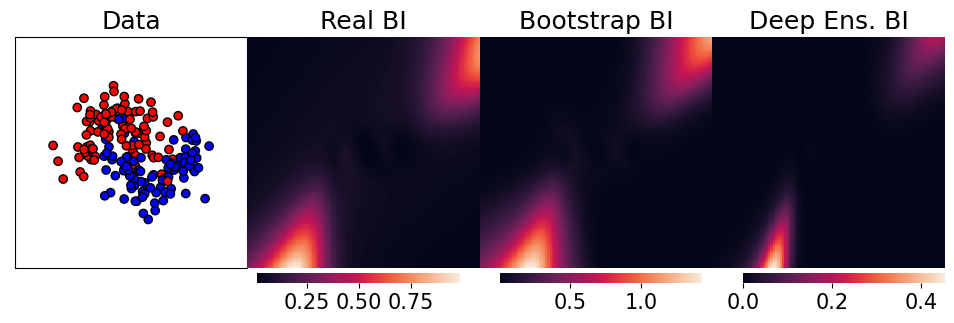}}
\vspace{.2in}
\caption{
    Comparing Bregman Information estimated via bootstrapping and Deep Ensembles for a neural network and a single training set with the ground truth ('Real BI').
}
\label{fig:BI_approx}
\end{figure}

\section{EXPERIMENTS}
\label{sec:exp}

In this section, we evaluate the Bregman Information and its approximations for classifiers via various experiments.
Throughout all evaluations, we use the estimator $\hat{\mathbb{B}}^{(n)}_{\mathrm{LSE}} = \frac{1}{n} \sum_{i=1}^n \mathrm{LSE} \left( \hat{z}_i \right) - \mathrm{LSE} \left( \frac{1}{n} \sum_{i=1}^n \hat{z}_i \right)$.
First, we evaluate typical classifiers on toy tasks, where we can simulate the ground truth.
Accessing the data generation process also allows us to compare different approximation schemes of the model Bregman Information.
Based on the insights, we provide experiments on corrupted CIFAR-10 and ImageNet and investigate how we may use the estimated Bregman Information for better predictive performance under domain drift.

\subsection{Bregman Information of Common Classifiers}

We assess the Bregman Information of the classifiers k-nearest neighbors, Support Vector Machine, Gaussian Process, Decision Tree, Random Forest, XGBoost, Naive Bayes, and neural network \citep{pml1Book}.
We create toy tasks and sample an arbitrary number of training datasets.
For each sampled dataset, we fit all of the previous classifiers.
This way, we can approximate the ground truth Bregman Information of each classifier arbitrarily well for growing sample size.
Empirically, we consider 64 samples as a sufficiently close approximation.
Further details appear in Appendix \ref{app:exp}.
We plot the Bregman Information in a close region around the data distribution in the second row of Figure \ref{fig:BI_toy_0}.
The first row depicts a single exemplary sample of a training set and the confidence score of each respective classifier to put the Bregman Information plots in a meaningful perspective.
As we can see, most models show a high uncertainty along the decision boundary.
For example, the uncertainty of SVMs and Gaussian Processes is restricted to an area close to the training distribution. 
The Bregman Information of KNN and decision-tree-based models suggests a narrow decision boundary of high uncertainty even where no data is present. 
In contrast, the neural network and the naive Bayes classifier show increasing uncertainty towards out-of-domain regions along the decision boundary.\\
Figure \ref{fig:BI_approx} illustrates that Deep Ensembles and Bootstrapping \citep{efron1994introduction} can serve as practical approximations for the ground truth Bregman Information.
More simulations of toy tasks and MC Dropout \citep{gal2016dropout} for approximation are presented in Appendix \ref{app:exp}.

\begin{figure*}[t]
\vspace{.1in}
\centering
\includegraphics[width=\textwidth]{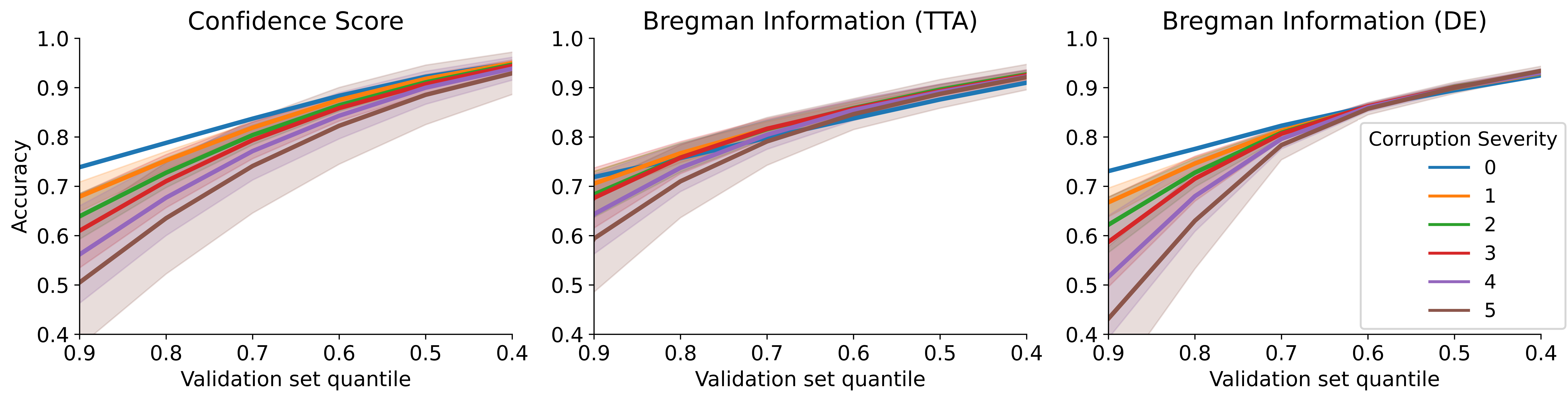}
\vspace{.1in}
\caption{
    Accuracy after discarding for different types of uncertainty on ImageNet and for varying levels of corruption on ImageNet-C.
    The error bounds stem from the corruption type.
    The Bregman Information is more robust to corruption severity and corruption type for stricter thresholds compared to Confidence scores.
    \textbf{Left:} Using predicted confidence as uncertainty estimate.
    \textbf{Middle:} Using Bregman Information based on test-time augmentation for uncertainty.
    \textbf{Right:} Again, using Bregman Information but based on Deep Ensembles.
}
\label{fig:ood_imgnet}
\end{figure*}

Based on our observations, the Bregman Information approximated by an ensemble could be a good proxy of the neural network's uncertainty even in regions we have not seen in the training data.
We could differ between in-domain and out-of-domain instances, especially if the decision boundary in a high-dimensional space, like images, is 'open' towards multiple directions.

\subsection{Out-of-Distribution Detection via Bregman Information}

In this section, we use the Bregman Information of ensemble predictions for out-of-domain detection on image data.
We propose a procedure along the following steps (c.f. Algorithm \ref{alg:BI}).
First, we require an ensemble of predictions to approximate the unknown uncertainty.
Next, we compute the Bregman Information for each data instance in the validation set according to the ensemble predictions.
We now have a set of instance-level Bregman Information for the in-domain data.
Then, we compute the empirical quantiles of the Bregman Information values in the validation set.
The central assumption is that out-of-domain data results in generally high Bregman Information.
Consequently, thresholding our classification based on a chosen quantile should discard out-of-domain instances while only discarding a controlled amount of in-domain ones.
For example, if we pick the 0.9-quantile, then 90\% of the validation data is considered in-domain.
In the ideal case, we only classify out-of-domain data instances close to the in-domain ones with a similar accuracy. \\
We assess the proposed procedure on CIFAR-10, ImageNet, and their corrupted versions (CIFAR-10-C and ImageNet-C) from \cite{hendrycks2019robustness}.
These corruptions are provided in different types and levels of severity, ranging from one to five, where five is the worst corrupted (c.f. Appendix \ref{app:exp}).
We evaluate Deep Ensembles and TTA as ensemble approaches.
For ImageNet, we used pre-trained ResNet-50 models from \cite{ashukha2020pitfalls}.
To not skew the results through a performance gap between single models and model ensembles, we only use a single ResNet as classifier and use the ensemble only for estimating the Bregman Information. \\
As a baseline, we use Algorithm \ref{alg:BI} with the predicted confidence score for uncertainty estimation (c.f. Algorithm \ref{alg:Conf} in Appendix \ref{app:exp}).
The results are depicted in Figures \ref{fig:ood_cif10} and \ref{fig:ood_imgnet}.
As we can see in the ImageNet case, upwards from the 0.6-quantile, our procedure successfully detects out-of-domain instances, which would have significantly lower accuracy than in-domain data.
Consequently, the classified out-of-domain instances do not degrade the model performance.
Further, the Bregman Information is very robust to the severity and the type of corruption. 
Contrary, the confidence score gives increasingly worse performance estimates with rising severity.

\begin{algorithm}
\caption{Classifying with uncertainty threshold}
\label{alg:BI}
\begin{algorithmic}
\Require Validation set $\mathcal{D}$, model, ensemble, $q \in \left[0, 1 \right]$, test instance $x^\prime$
\State BIs $\gets$ [BI(ensemble($x$)) for $x \in \mathcal{D}$]
\State threshold $ \gets $ quantile(BIs, q)
\If{BI(ensemble($x^\prime$)) $>$ threshold}
    \State label as OOD \Comment{Warning in real-world application}
\Else
    \State return model($x^\prime$)
\EndIf
\end{algorithmic}
\end{algorithm}

\subsection{Practical Limitations and Future Work}

The main contribution in this work is of theoretical nature.
Even though we demonstrate promising empirical results, more research is required for practical applications.
We performed the experiments to suggest in what research areas the Bregman Information can potentially improve current methodologies, especially in the OOD setting.
We leave a comparison in the context of state-of-the-art methods for future research.
The different approximations of the Bregman Information are also preliminary.
More extensive and thorough benchmarks may show better alternative approaches.
Further, the used estimator $\hat{\mathbb{B}}^{(n)}_{\mathrm{LSE}}$ is only asymptotically unbiased, and underestimates the theoretical quantity.
Corollary \ref{cor:classifnll} and Section \ref{sec:ens} may suggest to average ensembles in the logit space.
But, as \cite{gupta2022ensembles} demonstrated, averaging in the probability space may impact the bias in a positive way and reduce the overall error further than averaging in the logit space.

\section{CONCLUSION}

Through properties of functional Bregman divergences, we introduced a general bias-variance decomposition for strictly proper scores.
We discovered that the Bregman Information always represents the variance term.
Our decomposition generalizes and provides new formulations of the exponential family and classification log-likelihood decomposition.
Specifically, we formulated the classification case for logit predictions without requiring normalization to the log space.
Further, we derived new general insights for ensemble predictions and how we construct confidence regions for predictions.
As a real-world application, we proposed Bregman Information as uncertainty measure to facilitate model performance under domain drift via out-of-distribution detection for all degrees of severity and types of corruption.

\medskip

{
\small

\bibliography{main}
\bibliographystyle{icml2022}
}

\newpage

\section*{}

\newpage

\appendix

\onecolumn


\section{OVERVIEW}
In this appendix, we provide more rigorous definitions than in the main paper and the missing proofs in Appendix \ref{app:proofs}.
Further, we give a more detailed description of the experiments and showcase additional empirical results in Appendix \ref{app:exp}.

\section{MISSING PROOFS}
\label{app:proofs}

We first provide some more rigorous definitions than in the main paper in Section \ref{sec:app_prel}.
There, we also introduce and prove some basic facts with respect to these definitions, which we will then use in the proofs of Lemma \ref{le:conj_breg} in Section \ref{sec:proof_conj_breg}, Theorem \ref{th:scores_bvd} in Section \ref{sec:proof_scores_bvd}, Proposition \ref{prop:exp_fam_decomp} in Section \ref{sec:proof_exp_fam_decomp}, Corollary \ref{cor:classifnll} in Section \ref{sec:proof_classifnll}, and Proposition \ref{prop:BI_props} in Section \ref{sec:proof_BI_probs}.

\subsection{Preliminaries}
\label{sec:app_prel}

In the following, we introduce definitions and supporting Lemmas to derive our contributions in later sections.

We will make use of the \textbf{convex hull operator} defined as $\mathrm{co} \left( A \right) = \bigcap \left\{ C \subset X \mid A \subset C, C \text{ convex} \right\}$ for a set $A \subset X$ in a real linear vector space $X$ \citep{zalinescu2002convex}.
It consists of all finite convex combinations of elements in $A$.
Since the definition of the convex conjugate in the main paper is rather informal, we also state a more rigorous version \citep{zalinescu2002convex}.

\begin{definition}
    Given a vector space $X$ with dual vector space $X^*$, pairing $\left\langle x^*, x \right\rangle = x \left( x ^* \right)$ for $x \in X$ and $x^* \in X^*$, and a function $\phi \colon X \to \mathbb{R} \cup \left\{- \infty, \infty \right\}$, the convex conjugate $\phi^* \colon X^* \to \mathbb{R} \cup \left\{- \infty, \infty \right\}$ of $\phi$ is defined as $\phi^* \left( x^* \right) = \sup_{x \in X} \left\langle x^*, x \right\rangle - \phi \left( x \right)$.
\end{definition}

In the case $U \neq X$, we follow the convention that a convex $\phi \colon U \to \mathbb{R}$ is extended to $X$ via $\phi \left( x \right) = \infty$ for $x \nin U$ \citep{rockafellar1970convex, zalinescu2002convex}.
We also restate the definition of subgradients in a more formal manner.

\begin{definition}
    Let $\phi \colon U \to \mathbb{R}$ be a convex function in a vector space $X \supset U$ with dual vector space $X^*$ and pairing $\left\langle x^*, x \right\rangle = x \left( x ^* \right)$ for $x \in X$ and $x^* \in X^*$.
    The \textbf{subdifferential} of $\phi$ at a point $x \in U$ is defined as $\partial \phi \left( x \right) = \left\{ x^\prime \in X^* \mid \phi \left( y \right) \geq \phi \left( x \right) + \left\langle x^\prime, y - x \right\rangle, y \in U \right\}$.
    An element $x^\prime \in \partial \phi \left( x \right)$ is called subgradient of $\phi$ at $P$.
    We call a function $\phi^\prime \colon U \to X^*$ defined as $\phi^\prime \left( x \right) = x^\prime \in \partial \phi \left( x \right)$ a \textbf{subgradient} of $\phi$ on $U$.
\label{def:subdiff}
\end{definition}

Note that the inequality for the subgradient becomes strict if $\phi$ is strictly convex and $x \neq y$.
To not confuse a subgradient $x^\prime$ with other elements $x^*$ in the dual vector space $X^*$, we will write it with '$\prime$' instead of '$*$' contrary to other literature.
We did so in the main paper and continue like this throughout the appendix.

Further, as mentioned in Section \ref{sec:prel}, we use the definition of a (restricted) functional Bregman divergence for dual vector spaces based on \cite{10.3150/16-BEJ857}.

\begin{definition}
    Let $\phi \colon U \to \mathbb{R}$ be a convex function in a vector space $X \supset U$ with dual vector space $X^*$ and pairing $\left\langle x^*, x \right\rangle = x \left( x ^* \right)$ for $x \in X$ and $x^* \in X^*$.
    Let $\phi^\prime \colon U \to X^*$ be a subgradient of $\phi$.
    The \textbf{functional Bregman divergence} $d_{\phi, \phi^\prime} \colon U \times U \to \mathbb{R}$ generated by $\left( \phi, \phi^\prime \right)$ is defined as
    \begin{equation}
        d_{\phi, \phi^\prime} \left( x, y \right) = \phi \left( y \right) - \phi \left( x \right) - \left\langle \phi^\prime \left( x \right), y - x \right\rangle.
    \end{equation}
    Let $\phi^\prime_r \colon V \to X^*$ be another subgradient of $\phi$ with $V \subset U$.
    Then, $d_{\phi, \phi^\prime_r} \colon V \times U \to \mathbb{R}$ is a \textbf{restricted} functional Bregman divergence.
\label{def:rfbd}
\end{definition}

In our context, $U$ will be a convex subset of either of two vector spaces, which we introduce from \citep{10.3150/16-BEJ857}.
Let $\mathcal{P}$ be a convex set of probability measures of a measure space $\left( \Omega, \mathcal{F}, \mu \right)$.
We define the space of finite linear combinations of $\mathcal{P}$ as $\mathrm{span} \mathcal{P} = \left\{ \sum_{i=1}^n a_i P_i \mid a_1, \dots, a_n \in \mathbb{R}, P_1, \dots, P_n \in \mathcal{P}, n \in \mathbb{N} \right\}$.
We define the space of $\mathcal{P}$-integrable functions as $\mathcal{L} \left( \mathcal{P} \right) = \left\{ f \mid \int \left\lvert f \right\rvert \mathrm{d} P < \infty, P \in \mathcal{P} \right\}$.
Further, we use $. \cdot . \colon \mathcal{L} \left( \mathcal{P} \right) \times \mathrm{span} \mathcal{P} \to \mathbb{R}$ defined as $f \cdot P = \int f \mathrm{d} P$ as the pairing between the dual spaces $\mathrm{span} \mathcal{P}$ and $\mathcal{L} \left( \mathcal{P} \right)$. \\
When $\phi \colon U \to \mathbb{R}$ is the negative entropy of a proper score, we will have $U = \mathcal{P}$, $X = \mathrm{span} \mathcal{P}$, $X^* = \mathcal{L} \left( \mathcal{P} \right)$, and $\left\langle x^*, x \right\rangle = x^* \cdot x$.
The other case is when $\phi \colon U \to \mathbb{R}$ is the convex conjugate of a negative entropy of a proper score $S$.
Then, we have $U = \mathrm{co} \left( \left\{ S \left( P \right) \mid P \in \mathcal{P} \right\} \right)$, $X = \mathcal{L} \left( \mathcal{P} \right)$, $X^* = \mathrm{span} \mathcal{P}$, and $\left\langle x^*, x \right\rangle = x \cdot x^*$.
To reduce the possibility of confusion, we will not be using a general $U$, $X$, or $X^*$ in the following.
Rather, we only proof the exchange of arguments in functional Bregman divergences as encountered in proper scores \citep{10.3150/16-BEJ857}.
This way, it is always clear if functions have distributions as input or as output.
A proof for more general vector spaces is left for future work.

Further, note that in contrast to \cite{hendrickson1971proper} and \cite{10.3150/16-BEJ857}, we do \emph{not} extend a score and its entropy to the cone of $\mathcal{P}$ defined as $\mathrm{cone} \left( \mathcal{P} \right) = \left\{ \lambda P \mid \lambda > 0, P \in \mathcal{P} \right\}$.
Doing so would make the entropy a 1-homogeneous function.
But, the convex conjugate of a 1-homogeneous function is always zero \citep{fenchel1949conjugate}.
Consequently, we cannot generate a meaningful Bregman Information with the convex conjugate of an entropy extended to $\mathrm{cone} \left( \mathcal{P} \right)$.

We now state the following, which will be used in later proofs.

\begin{lemma}
    Let $\phi \colon \mathcal{P} \to \mathbb{R}$ be a strictly convex, lower semicontinuous function with subgradient $\phi^\prime$ such that $\phi \left( P \right) = \phi^\prime \left( P \right) \cdot P$ for all $P \in \mathcal{P}$.   
    Then, for all $P^\prime \in \phi^\prime \left( \mathcal{P} \right) \coloneqq \left\{ \phi^\prime \left( P \right) \mid P \in \mathcal{P} \right\}$ we have
    \begin{equation}
        \phi^* \left( P^\prime \right) = 0
    \end{equation}
    and for all $R^* \in \mathrm{co} \left( \phi^\prime \left( \mathcal{P} \right) \right)$ with $R^* \nin \phi^\prime \left( \mathcal{P} \right)$ we have
    \begin{equation}
        -\infty < \phi^* \left( R^* \right) < 0.
    \end{equation}
\label{le:SubgOfConv}
\end{lemma}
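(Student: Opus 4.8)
The plan is to read $\phi^*$ through its variational definition, $\phi^*(x^*) = \sup_{Q\in\mathcal{P}} x^*\cdot Q - \phi(Q)$, and to exploit the hypothesis $\phi(P) = \phi^\prime(P)\cdot P$, which is exactly the negative‑entropy relation $G(Q) = S(Q)\cdot Q$. First I would normalise the representation of $R^*$: since $\phi$ is strictly convex its subgradient $\phi^\prime$ is injective, so any $R^* \in \mathrm{co}(\phi^\prime(\mathcal{P}))$ can be written as $R^* = \sum_{i=1}^{n}\lambda_i\phi^\prime(P_i)$ with distinct $P_1,\dots,P_n\in\mathcal{P}$, weights $\lambda_i>0$, and $\sum_i\lambda_i=1$; the assumption $R^*\notin\phi^\prime(\mathcal{P})$ then forces $n\ge 2$.

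For the first claim, fix $P\in\mathcal{P}$ and $P^\prime=\phi^\prime(P)$. The subgradient inequality $\phi(Q)\ge\phi(P)+\phi^\prime(P)\cdot(Q-P)$ rearranges to $\phi^\prime(P)\cdot Q-\phi(Q)\le\phi^\prime(P)\cdot P-\phi(P)$, and the right‑hand side equals $0$ by the hypothesis $\phi(P)=\phi^\prime(P)\cdot P$. Taking the supremum over $Q\in\mathcal{P}$ and noting equality at $Q=P$ yields $\phi^*(P^\prime)=0$. Moreover, since $\phi$ is strictly convex the subgradient inequality is strict whenever $Q\ne P$ (cf.\ the remark after Definition~\ref{def:subdiff}), so $Q=P$ is the \emph{unique} maximiser in $\phi^*(P^\prime)=0$; I will reuse this below.

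For the second claim, finiteness from below is immediate: $\phi^*(R^*)\ge R^*\cdot P_1-\phi(P_1)>-\infty$, since $R^*\in\mathcal{L}(\mathcal{P})$ pairs finitely with $P_1\in\mathcal{P}$ and $\phi(P_1)\in\mathbb{R}$. For the upper bound, for every $Q\in\mathcal{P}$ we have, using $\sum_i\lambda_i=1$ and the first claim,
\[
R^*\cdot Q-\phi(Q)=\sum_{i=1}^{n}\lambda_i\bigl(\phi^\prime(P_i)\cdot Q-\phi(Q)\bigr)\le 0,
\]
so $\phi^*(R^*)\le 0$ (equivalently, $\phi^*$ is convex and vanishes on $\phi^\prime(\mathcal{P})$, hence is $\le 0$ on its convex hull; one may also read this as $-\phi^*(R^*)=\mathbb{B}_{\phi^*}[X]\ge 0$ for the finitely supported $X$ with $\mathbb{P}(X=\phi^\prime(P_i))=\lambda_i$). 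For the strict inequality $\phi^*(R^*)<0$ I argue by contradiction: assume $\phi^*(R^*)=0$. The supremum $\sup_{Q\in\mathcal{P}}R^*\cdot Q-\phi(Q)=0$ is attained at some $Q_0\in\mathcal{P}$ — this is where lower semicontinuity of $\phi$ enters, together with compactness of the set of probability measures in the pertinent topology; if attainment fails one runs the same reasoning along a maximising sequence and passes to the limit via lower semicontinuity. Then $0=R^*\cdot Q_0-\phi(Q_0)=\sum_{i=1}^{n}\lambda_i\bigl(\phi^\prime(P_i)\cdot Q_0-\phi(Q_0)\bigr)$ is a convex combination of terms each $\le 0$; since all $\lambda_i>0$, every term vanishes, so $\phi^\prime(P_i)\cdot Q_0-\phi(Q_0)=0$ for each $i$, i.e.\ $Q_0$ attains the supremum defining $\phi^*(\phi^\prime(P_i))=0$. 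By the uniqueness noted above, $Q_0=P_i$ for every $i$, contradicting $n\ge 2$ and the distinctness of the $P_i$. Hence $\phi^*(R^*)<0$.

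The main obstacle is precisely this last attainment point: the ``maximiser is unique'' argument itself is elementary, but making it rigorous in the functional setting needs either compactness of $\mathcal{P}$ so that the upper‑semicontinuous map $Q\mapsto R^*\cdot Q-\phi(Q)$ attains its maximum, or a careful limiting argument showing — using strict convexity and lower semicontinuity — that a maximising sequence cannot simultaneously approach two distinct $P_i$. Everything else is bookkeeping around Definition~\ref{def:subdiff} and the convex conjugate.
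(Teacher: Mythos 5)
Your proof is correct and follows essentially the same route as the paper: the subgradient inequality combined with $\phi(P)=\phi'(P)\cdot P$ gives $\phi^*(\phi'(P))=0$ with $P$ the unique maximiser, nonpositivity on the convex hull follows by averaging, and strict negativity is obtained by contradiction from "a convex combination of nonpositive terms vanishing forces each term to vanish, hence all $P_i$ coincide by strict convexity." On the attainment point you flag as the main obstacle, the paper does not assume compactness of $\mathcal{P}$; it takes exactly your second route, extracting a maximising sequence $(R_n)$ with $d_{\phi,\phi'}(P_i,R_n)\to 0$ for all $i$ and invoking lower semicontinuity plus strict convexity to force $P_1=\lim_n R_n=P_2$ (and in fact the paper is itself somewhat informal about the existence of that limit), so your argument is no less rigorous than the published one.
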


\begin{proof}
By definition we have for $P, Q \in \mathcal{P}$ and subgradient $P^\prime$ of $\phi \left( P \right)$ that
\begin{equation}
    \phi \left( Q \right) \geq \phi \left( P \right) + P^\prime \cdot \left(Q - P \right),
\end{equation}
from which follows 
\begin{equation}
    P^\prime \cdot P - \phi \left( P \right) \geq P^\prime \cdot Q - \phi \left( Q \right).
\end{equation}
Since $P \in \mathrm{span} \mathcal{P}$, we have
\begin{equation}
    \phi^* \left( P^\prime \right) = \sup_{Q \in \mathcal{P}} P^\prime \cdot Q - \phi \left( Q \right) = P^\prime \cdot P - \phi \left( P \right) = 0.
\end{equation}

As stated in \citep{zalinescu2002convex}, any $R^* \in \mathrm{co} \left( \phi^\prime \left( \mathcal{P} \right) \right)$ can be represented as a convex combination of elements in $\phi^\prime \left( \mathcal{P} \right)$.
To have shorter expressions, we assume $R^*$ is a combination of only two elements $P \neq Q$.
The proof for more combinations is analogous.
We use contradiction to show that the convex conjugate of the convex combination of (two) subgradients is strictly negative.
For this, assume $\phi^* \left( \lambda \phi^\prime \left( P \right) + \left( 1 - \lambda \right) \phi^\prime \left( Q \right) \right) = 0$, then we have

\begin{equation}
\begin{split}
    & \phi^* \left( \lambda \phi^\prime \left( P \right) + \left( 1 - \lambda \right) \phi^\prime \left( Q \right) \right) = 0 \\
    & \implies \sup_{R \in \mathcal{P}} \left( \lambda \phi^\prime \left( P \right) + \left( 1 - \lambda \right) \phi^\prime \left( Q \right) \right) \cdot R - \phi \left( R \right) = 0 \\
    & \implies \sup_{R \in \mathcal{P}} \lambda \left( \phi^\prime \left( P \right) - \phi^\prime \left( R \right) \right) \cdot R + \left( 1 - \lambda \right) \left( \phi^\prime \left( Q \right) - \phi^\prime \left( R \right) \right) \cdot R = 0 \\
    & \implies \sup_{R \in \mathcal{P}} - \lambda d_{\phi, \phi^\prime}\left( P, R \right) - \left( 1 - \lambda \right) d_{\phi, \phi^\prime}\left( Q, R \right) = 0 \\
    & \implies \inf_{R \in \mathcal{P}} d_{\phi, \phi^\prime}\left( P, R \right) + d_{\phi, \phi^\prime}\left( Q, R \right) = 0 \\
    & \implies \exists \left( R_n \right)_n \in \mathcal{P}^{\mathbb{N}} \colon \lim_{n \to \infty} d_{\phi, \phi^\prime}\left( P, R_n \right) = 0 = \lim_{n \to \infty} d_{\phi, \phi^\prime}\left( Q, R_n \right) \\
    \overset{\mathrm{l.s.c.}}&{\implies} \exists \left( R_n \right)_n \in \mathcal{P}^{\mathbb{N}} \colon d_{\phi, \phi^\prime}\left( P, \lim_{n \to \infty} R_n \right) = 0 = d_{\phi, \phi^\prime}\left( Q, \lim_{n \to \infty} R_n \right) \\
    \overset{\text{strictly convex}}&{\implies} \exists \left( R_n \right)_n \in \mathcal{P}^{\mathbb{N}} \colon P = \lim_{n \to \infty} R_n = Q.
\end{split}
\end{equation}

But, the last statement is a contradiction to our assumption $P \neq Q$, subsequently proving our claim.
Further, it must be finite since $ -\infty < - \lambda d_{\phi, \phi^\prime}\left( P, P \right) - \left( 1 - \lambda \right) d_{\phi, \phi^\prime}\left( Q, P \right) \leq \sup_{R \in \mathcal{P}} - \lambda d_{\phi, \phi^\prime}\left( P, R \right) - \left( 1 - \lambda \right) d_{\phi, \phi^\prime}\left( Q, R \right)$.

\end{proof}

From Lemma \ref{le:SubgOfConv} follows that if $\phi$ is the negative entropy of a proper score $S$, then $\phi^* \left( S \left( P \right) \right) = 0$ for all $P \in \mathcal{P}$.
Further, we will make use of the following properties.

\begin{lemma}
    If $\phi \colon \mathcal{P} \to \mathbb{R}$ is strictly convex, then 
    any subgradient $\phi^\prime$ of $\phi$ is injective and its inverse $\left(\phi^\prime\right)^{-1}$ exists
    on $\phi^\prime \left( \mathcal{P} \right) \coloneqq \left\{ \phi^\prime \left( P \right) \mid P \in \mathcal{P} \right\}$.
    Further, $\left(\phi^\prime\right)^{-1}$ is a subgradient of the convex conjugate $\phi^*$ on $\phi^\prime \left( \mathcal{P} \right)$.
\label{le:subg_inv}
\end{lemma}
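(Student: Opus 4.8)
The plan is to establish the two assertions in turn. Both follow from elementary properties of subgradients and of the convex conjugate; in particular, lower semicontinuity of $\phi$ is not needed here (that hypothesis only enters later, in Lemma \ref{le:conj_breg}).

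First I would prove injectivity of $\phi^\prime$ by contradiction. Suppose $P \neq Q$ in $\mathcal{P}$ with $\phi^\prime(P) = \phi^\prime(Q)$. As noted after Definition \ref{def:subdiff}, strict convexity turns the subgradient inequality into a strict one whenever the two arguments differ, so $\phi(Q) > \phi(P) + \phi^\prime(P) \cdot (Q - P)$ and $\phi(P) > \phi(Q) + \phi^\prime(Q) \cdot (P - Q)$. Adding these and using $\phi^\prime(P) = \phi^\prime(Q)$, the two pairing terms cancel and we obtain $\phi(P) + \phi(Q) > \phi(P) + \phi(Q)$, a contradiction. Hence $\phi^\prime$ is injective on $\mathcal{P}$ and $(\phi^\prime)^{-1} \colon \phi^\prime(\mathcal{P}) \to \mathcal{P}$ is well defined.

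For the second assertion, fix $P^\prime \in \phi^\prime(\mathcal{P})$ and put $P = (\phi^\prime)^{-1}(P^\prime)$, so that $P^\prime \in \partial\phi(P)$. I would first record the Fenchel--Young identity $\phi^*(P^\prime) = P^\prime \cdot P - \phi(P)$: the bound ``$\geq$'' holds by taking $P$ in the supremum defining $\phi^*$, and ``$\leq$'' follows by rewriting the subgradient inequality $\phi(Q) \geq \phi(P) + P^\prime \cdot (Q - P)$ as $P^\prime \cdot Q - \phi(Q) \leq P^\prime \cdot P - \phi(P)$ and taking the supremum over $Q \in \mathcal{P}$; this also shows $\phi^*(P^\prime)$ is finite, since $P^\prime \in \mathcal{L}(\mathcal{P})$ makes $P^\prime \cdot P$ finite. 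Now, for any $Q^\prime$ in the domain of $\phi^*$, the definition of $\phi^*$ gives $\phi^*(Q^\prime) \geq Q^\prime \cdot P - \phi(P)$, and substituting the identity above rewrites the right-hand side as $\phi^*(P^\prime) + (Q^\prime - P^\prime) \cdot P$. This is precisely the subgradient inequality for $\phi^*$ at $P^\prime$ with subgradient $P$, so $(\phi^\prime)^{-1}$ is a subgradient of $\phi^*$ on $\phi^\prime(\mathcal{P})$.

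There is no substantial difficulty in this argument; the only points that need care are bookkeeping ones. One must keep track of which member of the dual pair $\big(\mathrm{span} \mathcal{P},\, \mathcal{L}(\mathcal{P})\big)$ each object lives in and that the pairing is oriented as in the conjugate picture, and one should note that $(\phi^\prime)^{-1}(P^\prime) = P$ indeed lies in $\mathcal{P} \subset \mathrm{span} \mathcal{P}$, which is the space in which subgradients of $\phi^*$ are required to take values, so the claim is type-correct. It is also worth pointing out that only the easy direction of the Fenchel--Young equivalence is used, namely that $P^\prime \in \partial\phi(P)$ implies equality in Fenchel--Young; the converse implication, which would require $\phi$ to be closed, is not needed.
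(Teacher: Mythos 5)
Your proof is correct and follows essentially the same route as the paper: injectivity via the strict subgradient inequality applied in both directions (the paper phrases it as strict monotonicity, $0 > (\phi^\prime(P)-\phi^\prime(Q))\cdot(Q-P)$, rather than your contradiction, but these are the same argument), and the conjugate-subgradient claim via the Fenchel--Young equality $\phi^*(P^\prime) = P^\prime \cdot P - \phi(P)$ combined with the definition of $\phi^*$ as a supremum. Your forward derivation of the second part is marginally cleaner than the paper's chain of equivalences, and your observations that lower semicontinuity is not needed and that only the easy direction of Fenchel--Young is used are both accurate.
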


\begin{proof}
The proof is similar to the proof of Theorem 6.2.1 from \cite{kurdila2006convex}.

For $P,Q \in \mathcal{P}$ with $P \neq Q$ we have
\begin{equation}
\begin{split}
    & \phi \left( Q \right) > \phi \left( P \right) + \phi^\prime \left( P \right) \cdot \left( Q - P \right) \\
    \implies & \phi \left( Q \right) - \phi \left( P \right) > \phi^\prime \left( P \right) \cdot \left( Q - P \right) \\
\end{split}
\end{equation}

Reversing the roles of $P$ and $Q$ also gives
\begin{equation}
    \phi \left( P \right) - \phi \left( Q \right) > - \phi^\prime \left( Q \right) \cdot \left( Q - P \right).
\end{equation}

Adding the LHS and RHS of the last two inequalities results in

\begin{equation}
\begin{split}
    & 0 > \left( \phi^\prime \left( P \right) - \phi^\prime \left( Q \right) \right) \cdot \left( Q - P \right) \\
    \implies & \phi^\prime \left( P \right) \neq \phi^\prime \left( Q \right).
\end{split}
\end{equation}

Consequently, since $\phi^\prime \left( P \right)$ is unique for each $P \in \mathcal{P}$, it is injective, and the inverse $\left( \phi^\prime \right)^{-1}$ exists for all $P^\prime \in \phi^\prime \left( \mathcal{P} \right)$.

Next, we show that $\left( \phi^\prime \right)^{-1}$ is a subgradient of $\phi^*$ on $\phi^\prime \left( \mathcal{P} \right)$.
By definition, for all $P^\prime \in \phi^\prime \left( \mathcal{P} \right)$ there exists $P \in \mathcal{P}$ such that $P^\prime = \phi^\prime \left( P \right)$ and $\left( \phi^\prime \right)^{-1} \left( P^\prime \right) = P$.
For all $P^\prime, Q^\prime \in \phi^\prime \left( \mathcal{P} \right)$ we have
\begin{equation}
\begin{split}
    & \phi^* \left( Q^\prime \right) \geq \phi^* \left( P^\prime \right) + \left( Q^\prime - P^\prime \right) \cdot \left( \phi^\prime \right)^{-1} \left( P^\prime \right)\\
    \iff & \sup_{Y \in \mathrm{span} \mathcal{P}} Q^\prime \cdot Y - \phi \left( Y \right) \geq \sup_{X \in \mathrm{span} \mathcal{P}} P^\prime \cdot X - \phi \left( X \right) + Q^\prime \cdot P - P^\prime \cdot P \\
    \overset{\text{Le }\ref{le:SubgOfConv}}{\iff} & Q^\prime \cdot Q - \phi \left( Q \right) \geq P^\prime \cdot P - \phi \left( P \right) + Q^\prime \cdot P - P^\prime \cdot P \\
    \iff & \phi \left( P \right) + Q^\prime \cdot Q \geq \phi \left( Q \right) + Q^\prime \cdot P \\
    \iff & \phi \left( P \right) \geq \phi \left( Q \right) + Q^\prime \cdot \left( P - Q \right). \\
\end{split}
\end{equation}

Since $Q^\prime = \phi^\prime \left( Q \right)$ is a subgradient of $\phi$ at point $Q$, the last line holds and confirms that $\left( \phi^\prime \right)^{-1}$ is a subgradient of $\phi^*$ on $\phi^\prime \left( \mathcal{P} \right)$.


\end{proof}

So far, we established the theoretical foundation to perform the exchange of the arguments in a functional Bregman divergence.
But, Lemma \ref{le:conj_breg} also requires that the convex conjugate is finite on $\mathbb{E} \left[ Q^\prime \right]$.
This is not self evident since $\mathbb{E} \left[ Q^\prime \right] \nin \phi^\prime \left( U \right)$ in general.
Consequently, we also require the following.

\begin{lemma}
    Given a strictly convex function $\phi \colon \mathcal{P} \to \mathbb{R}$ with subgradient $\phi^\prime$ such that $\phi \left( P \right) = \phi^\prime \left( P \right) \cdot P$ for all $P \in \mathcal{P}$,
    and let $Q$ be a 
    random variable with values in $\mathcal{P}$ 
    such that $\left\lvert \mathbb{E} \left[ \phi^\prime \left( Q \right) \cdot P \right] \right\rvert < \infty$ and $\mathbb{E} \left[ Q^\prime \right] \in \mathrm{co} \left( \phi^\prime \left( \mathcal{P} \right) \right)$, then 
    \begin{equation}
        -\infty < \phi^* \left( \mathbb{E} \left[ \phi^\prime \left( Q \right) \right] \right) \leq 0.
    \end{equation}
\label{le:conv_conj_E}
\end{lemma}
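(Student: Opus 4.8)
The plan is to derive the bound directly from the definition of the convex conjugate together with Lemma~\ref{le:SubgOfConv}. Throughout, write $Q^\prime = \phi^\prime(Q)$, so that the hypotheses read $\lvert \mathbb{E}[Q^\prime \cdot P]\rvert < \infty$ for every $P\in\mathcal{P}$ and $\mathbb{E}[Q^\prime]\in\mathrm{co}(\phi^\prime(\mathcal{P}))$. The first step is to observe that $\mathbb{E}[Q^\prime]$ is a genuine element of the dual space $\mathcal{L}(\mathcal{P})$ (it lies in $\mathrm{co}(\phi^\prime(\mathcal{P}))\subset\mathcal{L}(\mathcal{P})$, since $\mathcal{L}(\mathcal{P})$ is a vector space containing $\phi^\prime(\mathcal{P})$) and that, by the integrability assumption and Fubini, the pairing commutes with the expectation, so that $\mathbb{E}[Q^\prime]\cdot P = \mathbb{E}[Q^\prime\cdot P]$ for all $P\in\mathcal{P}$, and hence on all of $\mathrm{span}\,\mathcal{P}$ by linearity. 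Together with $\phi\equiv\infty$ off $\mathcal{P}$, this gives
\begin{equation*}
  \phi^*\!\left(\mathbb{E}[Q^\prime]\right) = \sup_{P\in\mathcal{P}}\ \mathbb{E}\!\left[Q^\prime\cdot P\right] - \phi(P).
\end{equation*}

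For the upper bound, I would apply the Fenchel--Young inequality pointwise: for each $\omega$, the value $Q^\prime(\omega)=\phi^\prime(Q(\omega))$ lies in $\phi^\prime(\mathcal{P})$, so Lemma~\ref{le:SubgOfConv} yields $\phi^*(Q^\prime(\omega))=0$, whence $Q^\prime(\omega)\cdot P - \phi(P)\leq\phi^*(Q^\prime(\omega))=0$ for every $P\in\mathcal{P}$. Taking expectations gives $\mathbb{E}[Q^\prime\cdot P]-\phi(P)\leq 0$ for each $P$, and taking the supremum over $P\in\mathcal{P}$ yields $\phi^*(\mathbb{E}[Q^\prime])\leq 0$. (This is morally Jensen's inequality for the convex function $\phi^*$ applied along $Q^\prime$, but the pointwise argument avoids having to justify Jensen directly in the infinite-dimensional setting.) For the lower bound, I would simply lower-bound the supremum by its value at an arbitrary fixed $P_0\in\mathcal{P}$: $\phi^*(\mathbb{E}[Q^\prime])\geq \mathbb{E}[Q^\prime\cdot P_0]-\phi(P_0)$, and the right-hand side is finite since $\lvert\mathbb{E}[Q^\prime\cdot P_0]\rvert<\infty$ by hypothesis and $\phi(P_0)\in\mathbb{R}$. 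Combining the two bounds gives $-\infty<\phi^*(\mathbb{E}[Q^\prime])\leq 0$.

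The main obstacle is the bookkeeping in the first step rather than any of the estimates: one has to make the exchange $\mathbb{E}[Q^\prime]\cdot P = \mathbb{E}[Q^\prime\cdot P]$ rigorous, which is a Fubini-type statement for the weak (Pettis) integral of the $\mathcal{L}(\mathcal{P})$-valued map $\omega\mapsto\phi^\prime(Q(\omega))$. This is precisely what the two hypotheses supply: the scalar integrability $\lvert\mathbb{E}[Q^\prime\cdot P]\rvert<\infty$ makes each scalarized integral finite and the interchange legitimate, while $\mathbb{E}[Q^\prime]\in\mathrm{co}(\phi^\prime(\mathcal{P}))$ guarantees the resulting functional actually lands in the space $\mathcal{L}(\mathcal{P})$ on which Lemma~\ref{le:SubgOfConv} controls $\phi^*$, rather than merely being defined abstractly on $\mathrm{span}\,\mathcal{P}$. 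With that granted, the rest is immediate. I would therefore present the proof in the order: (i) well-definedness and the pairing identity, (ii) upper bound via Fenchel--Young plus Lemma~\ref{le:SubgOfConv}, (iii) lower bound by evaluating the supremum at one point.
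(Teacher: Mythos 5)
Your proof is correct and follows essentially the same route as the paper's: the lower bound is obtained identically by evaluating the supremum defining $\phi^*$ at a single fixed $P_0 \in \mathcal{P}$ using the integrability hypothesis, and the upper bound rests on Lemma~\ref{le:SubgOfConv}. The only (harmless) difference is that the paper applies Lemma~\ref{le:SubgOfConv} directly to $\mathbb{E}\left[ Q^\prime \right]$ as an element of $\mathrm{co}\left( \phi^\prime\left( \mathcal{P} \right) \right)$, whereas you apply it pointwise to each realization $Q^\prime\left(\omega\right) \in \phi^\prime\left( \mathcal{P} \right)$ and then integrate, which in fact needs only the Fubini-type pairing identity rather than the convex-hull hypothesis for this half of the bound.
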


\begin{proof}
    Let $Q^\prime \coloneqq \phi^\prime \left( Q \right)$.
    
    Since $\mathbb{E} \left[ Q^\prime \right] \in \mathrm{co} \left( \phi^\prime \left( \mathcal{P} \right) \right)$, we have $\phi^* \left( \mathbb{E} \left[ Q^\prime \right] \right) \leq 0$ by Lemma \ref{le:SubgOfConv}.

    Further, due to $\left\lvert \mathbb{E} \left[ \phi^\prime \left( Q \right) \cdot P \right] \right\rvert < \infty$ we have $\mathbb{E} \left[ Q^\prime \right] \in \mathcal{L} \left( \mathcal{P} \right)$.
    Thus, for any $P \in \mathcal{P}$ it holds
    \begin{equation}
        -\infty < \mathbb{E} \left[ Q^\prime \right] \cdot P - \phi \left( P \right) \leq \sup_{Q \in \mathrm{span} \mathcal{P}} \mathbb{E} \left[ Q^\prime \right] \cdot Q - \phi \left( Q \right) = \phi^* \left( \mathbb{E} \left[ Q^\prime \right] \right).
    \end{equation}


\end{proof}


We can now offer the missing proofs of the main paper.

\subsection{Proof of Lemma \ref{le:conj_breg}}
\label{sec:proof_conj_breg}

Let $\phi^\prime$ be the subgradient of a strictly convex function $\phi \colon \mathcal{P} \to \mathbb{R}$.
We first show the first equality in Lemma \ref{le:conj_breg}.
Based on the definition of a functional Bregman divergence with $p, q \in \mathcal{P}$, we have
\begin{equation}
\begin{split}
    & d_{\phi, \phi^\prime} \left( p, q \right) \\
    & = \phi \left( q \right) - \phi \left( p \right) - \phi^\prime \left( p \right) \cdot \left( q - p \right) \\
    & = \phi \left( q \right) - \phi \left( p \right) - \phi^\prime \left( p \right) \cdot q + \phi^\prime \left( p \right) \cdot p + \phi^\prime \left( q \right) \cdot q - \phi^\prime \left( q \right) \cdot q \\
    \overset{\text{Le }\ref{le:SubgOfConv}}&{=} \phi^* \left( \phi^\prime \left( p \right) \right) - \phi^* \left( \phi^\prime \left( q \right) \right) - \left( \phi^\prime \left( p \right) - \phi^\prime \left( q \right) \right) \cdot q \\
    \overset{\text{Le }\ref{le:subg_inv}}&{=} \phi^* \left( \phi^\prime \left( p \right) \right) - \phi^* \left( \phi^\prime \left( q \right) \right) - \left( \phi^\prime \left( p \right) - \phi^\prime \left( q \right) \right) \cdot \left( \phi^\prime \right)^{-1} \left( \phi^\prime \left( q \right) \right) \\
    & = d_{\phi^*, \left( \phi^\prime \right)^{-1}} \left( \phi^\prime \left( q \right), \phi^\prime \left( p \right) \right).
\label{eq:fbd_flip}
\end{split}
\end{equation}

Where $d_{\phi^*, \left( \phi^\prime \right)^{-1}} \colon \phi^\prime \left( \mathcal{P} \right) \times \mathrm{co} \left( \phi^\prime \left( \mathcal{P} \right) \right) \to \mathbb{R}$ is well-defined due to Lemma \ref{le:SubgOfConv}.
Since $\mathrm{co} \left( \phi^\prime \left( \mathcal{P} \right) \right)$ is a convex subset in the vector space $\mathcal{L} \left( \mathcal{P} \right)$ and $\left( \phi^\prime \right)^{-1}$ is a subgradient of $\phi^*$ (c.f. Lemma \ref{le:subg_inv}), $d_{\phi^*, \left( \phi^\prime \right)^{-1}}$ is a restricted functional Bregman divergences by Definition \ref{def:rfbd}.
If $p^\prime, q^\prime \in \phi^\prime \left( \mathcal{P} \right)$, we can set $\left(\phi^\prime \right)^{-1} \left( p^\prime \right) = p$ and $\left(\phi^\prime \right)^{-1} \left( q^\prime \right) = q$ in Equation \ref{eq:fbd_flip} and receive a similar result for the second equality in Lemma \ref{le:conj_breg}.

Now, let $Q^\prime$ be a random variable with realizations in $\phi^\prime \left( \mathcal{P} \right)$ such that $\left\lvert \mathbb{E} \left[ Q^\prime \cdot P \right] \right\rvert < \infty$ for all $P \in \mathcal{P}$ and $\mathbb{E} \left[ Q^\prime \right] \in \mathrm{co} \left( \phi^\prime \left( \mathcal{P} \right) \right)$.
Then, we get the last equality in Lemma \ref{le:conj_breg} with  $p^\prime \in \phi^\prime \left( \mathcal{P} \right)$ by
\begin{equation}
\begin{split}
    & \mathbb{E} \left[ d_{\phi^*, \left( \phi^\prime \right)^{-1}} \left( p^\prime, Q^\prime \right) \right] \\
    & = \mathbb{E} \left[ \phi^* \left( Q^\prime \right) - \phi^* \left( p^\prime \right) - \left( Q^\prime - p^\prime \right) \cdot \left( \phi^\prime \right)^{-1} \left( p^\prime \right) \right] \\
    \overset{\text{Le }\ref{le:conv_conj_E}}&{=} \mathbb{E} \left[ \phi^* \left( Q^\prime \right) - \phi^* \left( p^\prime \right) - \left( Q^\prime - p^\prime \right) \cdot \left( \phi^\prime \right)^{-1} \left( p^\prime \right) \right] + \phi^* \left( \mathbb{E} \left[ Q^\prime \right] \right) - \phi^* \left( \mathbb{E} \left[ Q^\prime \right] \right) \\
    & = \phi^* \left( \mathbb{E} \left[ Q^\prime \right] \right) - \phi^* \left( p^\prime \right) - \left( \mathbb{E} \left[ Q^\prime \right] - p^\prime \right) \cdot \left( \phi^\prime \right)^{-1} \left( p^\prime \right) + \mathbb{E} \left[ \phi^* \left( Q^\prime \right) \right] - \phi^* \left( \mathbb{E} \left[ Q^\prime \right] \right) \\
    & = d_{\phi^*, \left( \phi^\prime \right)^{-1}} \left( p^\prime, \mathbb{E} \left[ Q^\prime \right] \right) + \mathbb{B}_{\phi^*} \left[ Q^\prime \right].
\end{split}
\end{equation}

We now have the necessary requirements to prove our main result.

\subsection{Proof of Theorem \ref{th:scores_bvd}}
\label{sec:proof_scores_bvd}

For completeness, we derive the relation between proper scores and functional Bregman divergences in the following, even though it is already known in the literature \citep{10.3150/16-BEJ857}.

Note that a score $S$ proper on $\mathcal{P}$ is a subgradient of $G$ on $\mathcal{P}$ since for all $P, Q \in \mathcal{P}$

\begin{equation}
\begin{split}
    & S \left( Q \right) \cdot Q \geq S \left( P \right) \cdot Q \\
    & \iff S \left( Q \right) \cdot Q \geq S \left( P \right) \cdot P + S \left( P \right) \cdot Q - S \left( P \right) \cdot P \\
    \overset{\text{def}}&{\iff} G \left( Q \right) \geq G \left( P \right) + S \left( P \right) \cdot \left( Q - P \right). \\
\end{split}
\end{equation}

The relation between a proper score $S$ and a functional Bregman divergence $d_{G,S}$ on convex $\mathcal{P}$ is then given by

\begin{equation}
\begin{split}
    S \left( P \right) \cdot Q & = S \left( Q \right) \cdot Q - S \left( Q \right) \cdot Q + S \left( P \right) \cdot Q - S \left( P \right) \cdot P + S \left( P \right) \cdot P \\
    \overset{\text{def}}&{=} G \left( Q \right) - G \left( Q \right) + G \left( P \right) + S \left( P \right) \cdot \left( Q - P \right) \\
    \overset{\text{def}}&{=} G \left( Q \right) - d_{G, S} \left( P, Q \right).
\label{eq:S_to_dGS}
\end{split}
\end{equation}

Now, let $S$ be strictly proper on convex $\mathcal{P}$.
Then its negative entropy $G$ is strictly convex on $\mathcal{P}$ \citep{10.3150/16-BEJ857}.
Further, let $P \colon \Omega \to \mathcal{P}$ be a random variable such that the integrals $\mathbb{E} \left[ S \left( P \right)\left( Y \right) \right]$ and $\mathbb{E} \left[ G \left( P \right) \right]$ exist for all $Y \sim Q \in \mathcal{P}$, and $\mathbb{E} \left[ S \left( P \right) \right] \in \mathrm{co} \left( \phi^\prime \left( \mathcal{P} \right) \right)$.
Then, we have

\begin{equation}
\begin{split}
    \mathbb{E} \left[ - S \left( P \right)\left( Y \right) \right] & = - \mathbb{E} \left[ S \left( P \right) \cdot Q \right] \\
    \overset{\text{Eq }\eqref{eq:S_to_dGS}}&{=} - G \left( Q \right) + \mathbb{E} \left[ d_{G, S} \left( P, Q \right) \right] \\
    \overset{\text{Le }\ref{le:conj_breg}}&{=} - G \left( Q \right) + \mathbb{E} \left[ d_{G^*, S^{-1}} \left( S \left( Q \right), S \left( P \right) \right) \right] \\
    \overset{\text{Le }\ref{le:conj_breg}}&{=} - G \left( Q \right) + d_{G^*, S^{-1}} \left( S \left( Q \right), \mathbb{E} \left[ S \left( P \right) \right] \right) + \mathbb{B}_{G^*} \left[ S \left( P \right) \right]. \\
\end{split}
\end{equation}

\subsection{Proof of Proposition \ref{prop:exp_fam_decomp}}
\label{sec:proof_exp_fam_decomp}

Theorem \ref{th:scores_bvd} is stated for distributions.
Exponential families are usually stated in form of their density or mass function, which are also used for the log-likelihood.
Further, Proposition \ref{prop:exp_fam_decomp} assumes we are restricted to a specific exponential family.
In this context, the Radon-Nikodym derivative of a distribution $P_\theta$ is $p_\theta \coloneqq \frac{\mathrm{d} P_\theta}{\mathrm{d} \mu}$ with base measure $\mu$ of the related measure space $\left( \Omega, \mathcal{F}, \mu \right)$.
For continuous distributions, $\mu$ is the Lebesgue measure, and for discrete distributions, $\mu$ is the counting measure.
We assume the set of distributions $\mathcal{P}$ consists of distributions with the same base measure.
To state our proof for discrete as well as continuous families, we will use the Radon-Nikodym formulation.

Further, we require the more general formulations for the log score, the negative Shannon entropy, and the log partition function by $S \left( P \right) = \ln \frac{\mathrm{d} P}{\mathrm{d} \mu}$, $H \left( P \right) = \ln \frac{\mathrm{d} P}{\mathrm{d} \mu} \cdot P = \int_\Omega \ln \frac{\mathrm{d} P}{\mathrm{d} \mu} \mathrm{d} P$, and $H^* \left( P^* \right) = \ln \int_\Omega \exp P^* \mathrm{d} \mu$ for $P^* \in \mathcal{L} \left( \mathcal{P} \right)$.
For densities, these formulations reduce to the ones provided in Example \ref{ex:log}.

For an exponential family, we have 
\begin{equation}
    \frac{\mathrm{d} P_\theta}{\mathrm{d} \mu} \left( x \right) = p_\theta \left( x \right) = \exp \left( \left\langle \theta, T \left( x \right) \right\rangle - A \left( \theta \right) \right) h \left( x \right)
\end{equation}

and, thus, also 

\begin{equation}
    \frac{\mathrm{d} P_\theta}{\mathrm{d} \mu} = p_\theta = \exp \left( \left\langle \theta, T \right\rangle - A \left( \theta \right) \right) h \in \mathcal{L} \left( P \right).
\end{equation}

The last statement also introduces the notation we will use.

For the log score, it follows



\begin{equation}
    \mathbb{E} \left[ S \left( P_{\hat{\theta}} \right) \right] = \mathbb{E} \left [ \ln \frac{\mathrm{d} P_{\hat{\theta}}}{\mathrm{d} \mu} \right] = \left\langle \mathbb{E} \left [ \hat{\theta} \right], T \right\rangle - \mathbb{E} \left [ A \left( \hat{\theta} \right) \right] - \ln h
\label{eq:log_score_def}
\end{equation}
 
 which gives 
 
\begin{equation}
\begin{split}
    H^* \left( \mathbb{E} \left [ \ln \frac{\mathrm{d} P_{\hat{\theta}}}{\mathrm{d} \mu} \right] \right) & = \ln \int \exp \left( \left\langle \mathbb{E} \left [ \hat{\theta} \right], T \right\rangle - \mathbb{E} \left [ A \left( \hat{\theta} \right) \right] \right) h \mathrm{d} \mu \\
    & = \ln \int \exp \left( \left\langle \mathbb{E} \left [ \hat{\theta} \right], T \right\rangle \right) h \mathrm{d} \mu - \mathbb{E} \left [ A \left( \hat{\theta} \right) \right] \\
    & = A \left( \mathbb{E} \left [ \hat{\theta} \right] \right) - \mathbb{E} \left [ A \left( \hat{\theta} \right) \right].
\label{eq:H*=JG}
\end{split}
\end{equation}

Consequently, with $\mathbb{B}_{H^*} \left [ \ln \frac{\mathrm{d} P_{\hat{\theta}}}{\mathrm{d} \mu} \right] = - H^* \left( \mathbb{E} \left [ \ln \frac{\mathrm{d} P_{\hat{\theta}}}{\mathrm{d} \mu} \right] \right)$ stated in Example \ref{ex:log} we can already say

\begin{equation}
\begin{split}
    \mathbb{B}_{H^*} \left [ \ln \frac{\mathrm{d} P_{\hat{\theta}}}{\mathrm{d} \mu} \right] & = \mathbb{E} \left [ A \left( \hat{\theta} \right) \right] -  A \left( \mathbb{E} \left [ \hat{\theta} \right] \right) = \mathbb{B}_A \left[ \hat{\theta} \right].
\end{split}
\end{equation}

The reduction from a functional Bregman Information to a vector-based Bregman Information is a remarkable fact for exponential families, which will not hold for the bias term as we will see in the following.
For the bias term, we first have to make some further additional statements.
Note that from definition of the log score, it follows that $S^{-1} \left( P^\prime \right) = \int \exp P^\prime \mathrm{d} \mu$, which is a mapping from $\mathcal{F}$ to $\mathbb{R}$.
To confirm the inverse, note that for all $P \in \mathcal{P}$ and for all $F \in \mathcal{F}$, we have

\begin{equation}
    S^{-1} \left( S \left( P \right) \right) \left( F \right) = \left( \int \exp \ln \frac{\mathrm{d} P}{\mathrm{d} \mu} \mathrm{d} \mu \right) \left( F \right) =  \int_F \frac{\mathrm{d} P}{\mathrm{d} \mu} \mathrm{d} \mu \overset{\text{(i)}}{=} P \left( F \right),
\label{eq:SinvS}
\end{equation}

where we used the Radon-Nikodym theorem in (i).

Then, for all $P^\prime \in \left\{ S \left( P \right) \mid P \in \mathcal{P} \right\} \subset \mathcal{L} \left( \mathcal{P} \right)$ and $x \in \Omega$ it holds almost surely

\begin{equation}
\begin{split}
    S \left( S^{-1} \left( P^\prime \right) \right) \left( x \right) & = \left( \ln \frac{\mathrm{d} \int \exp P^\prime \mathrm{d} \mu}{\mathrm{d} \mu} \right) \left( x \right) \\
    & = \ln \left( \frac{\mathrm{d} \int \exp P^* \mathrm{d} \mu}{\mathrm{d} \mu} \left( x \right) \right) \\
    & = \ln \left( \lim_{B \to \left\{ x \right\}} \frac{1}{\mu \left( B \right)} \int_B \exp P^\prime \mathrm{d} \mu \right) \\
    \overset{\text{(ii)}}&{=} \ln \left( \exp P^\prime \left( x \right) \right) = P^\prime \left( x \right),
\end{split}
\end{equation}

where we used the Lebesgue differentiation theorem in (ii).

\begin{remark}
It is possible to extend $S^{-1}$ via $\frac{1}{\int_\Omega \exp P^* \mathrm{d} \mu} S^{-1}$ to $\mathcal{L} \left( \mathcal{P} \right)$.
This makes it the subgradient of $H^*$ on $\mathcal{L} \left( \mathcal{P} \right)$ but it is unnecessary for the proof.
\end{remark}

Following from Equation \eqref{eq:log_score_def}, we will also make use of

\begin{equation}
    \int_\Omega \mathbb{E} \left[ \ln \frac{\mathrm{d} P_{\hat{\theta}}}{\mathrm{d} \mu} \right] \mathrm{d} Q = \int_\Omega \left\langle \mathbb{E} \left [ \hat{\theta} \right], T \right\rangle - \mathbb{E} \left [ A \left( \hat{\theta} \right) \right] - \ln h \; \mathrm{d} Q \\
    \overset{Y \sim Q}{=} \left\langle \mathbb{E} \left [ \hat{\theta} \right], \mathbb{E} \left[ T \left( Y \right) \right] \right\rangle - \mathbb{E} \left [ A \left( \hat{\theta} \right) \right] - \mathbb{E} \left[ \ln h \left( Y \right) \right].
\label{eq:e_ln_q}
\end{equation}

For the bias term in Theorem \ref{th:scores_bvd}, we first assume a general $Y \sim Q$ to demonstrate our claim about Proposition \ref{prop:exp_fam_decomp} that the decomposition holds even when the distribution assumption is wrong.
Now, we can state that

\begin{equation}
\begin{split}
    & d_{H^*, S^{-1}} \left( \ln \frac{\mathrm{d} Q}{\mathrm{d} \mu}, \mathbb{E} \left[ \ln \frac{\mathrm{d} P_{\hat{\theta}}}{\mathrm{d} \mu} \right] \right) \\
    \overset{\text{def}}&{=} \underbrace{H^* \left( \mathbb{E} \left[ \ln \frac{\mathrm{d} P_{\hat{\theta}}}{\mathrm{d} \mu} \right] \right)}_{\overset{\text{Eq }\eqref{eq:H*=JG}}{=} A \left( \mathbb{E} \left [ \hat{\theta} \right] \right) - \mathbb{E} \left [ A \left( \hat{\theta} \right) \right]} - \underbrace{H^* \left( \ln \frac{\mathrm{d} Q}{\mathrm{d} \mu} \right)}_{\overset{\text{Ex }\ref{ex:log}}{=}0} - \left( \mathbb{E} \left[ \ln \frac{\mathrm{d} P_{\hat{\theta}}}{\mathrm{d} \mu} \right] - \ln \frac{\mathrm{d} Q}{\mathrm{d} \mu} \right) \cdot \underbrace{S^{-1} \left( \ln \frac{\mathrm{d} Q}{\mathrm{d} \mu} \right)}_{\overset{\text{Eq }\eqref{eq:SinvS}}{=} Q} \\
    & = A \left( \mathbb{E} \left [ \hat{\theta} \right] \right) - \mathbb{E} \left [ A \left( \hat{\theta} \right) \right] - \int_\Omega \mathbb{E} \left[ \ln \frac{\mathrm{d} P_{\hat{\theta}}}{\mathrm{d} \mu} \right] \mathrm{d} Q + H \left( Q \right) \\
    \overset{\text{Eq }\eqref{eq:e_ln_q}}&{=} A \left( \mathbb{E} \left [ \hat{\theta} \right] \right) - \left\langle \mathbb{E} \left [ \hat{\theta} \right], \mathbb{E} \left[ T \left( Y \right) \right] \right\rangle + \mathbb{E} \left[ \ln h \left( Y \right) \right] + H \left( Q \right)\\
    & = A \left( \mathbb{E} \left [ \hat{\theta} \right] \right) - \left\langle \mathbb{E} \left [ \hat{\theta} \right], \mathbb{E} \left[ T \left( Y \right) \right] \right\rangle + \mathbb{E} \left[ \ln h \left( Y \right) \right] + H \left( Q \right) + A^* \left( \mathbb{E} \left[ T \left( Y \right) \right] \right) - A^* \left( \mathbb{E} \left[ T \left( Y \right) \right] \right) \\
    \overset{\text{Le }\ref{le:SubgOfConv}}&{=} A \left( \mathbb{E} \left [ \hat{\theta} \right] \right) - \left\langle \mathbb{E} \left [ \hat{\theta} \right], \mathbb{E} \left[ T \left( Y \right) \right] \right\rangle + \mathbb{E} \left[ \ln h \left( Y \right) \right] + H \left( Q \right) + \\
    & \quad \quad + \left\langle \nabla A \left( \mathbb{E} \left[ T \left( Y \right) \right] \right), \mathbb{E} \left[ T \left( Y \right) \right] \right\rangle - A \left( \nabla A^* \left( \mathbb{E} \left[ T \left( Y \right) \right] \right) \right) - A^* \left( \mathbb{E} \left[ T \left( Y \right) \right] \right) \\
    & = A \left( \mathbb{E} \left [ \hat{\theta} \right] \right) - A \left( \nabla A^* \left( \mathbb{E} \left[ T \left( Y \right) \right] \right) \right) - \left\langle \nabla A \left( \nabla A^* \left( \mathbb{E} \left[ T \left( Y \right) \right] \right) \right), \mathbb{E} \left [ \hat{\theta} \right] - A^* \left( \mathbb{E} \left[ T \left( Y \right) \right] \right) \right\rangle + \\
    & \quad \quad + \mathbb{E} \left[ \ln h \left( Y \right) \right] + H \left( Q \right) - A^* \left( \mathbb{E} \left[ T \left( Y \right) \right] \right) \\
    \overset{\text{def}}&{=} d_A \left( \nabla A^* \left( \mathbb{E} \left[ T \left( Y \right) \right] \right), \mathbb{E} \left [ \hat{\theta} \right] \right) + \mathbb{E} \left[ \ln h \left( Y \right) \right] + H \left( Q \right) - A^* \left( \mathbb{E} \left[ T \left( Y \right) \right] \right).
\label{eq:bias_red}
\end{split}
\end{equation}

As we can see, while the functional Bregman Information nicely reduces to a vector-based Bregman Information, it is not the case for the functional form of the bias.
Specifically, the functional bias and noise term have to be taken together to end up with a vector-based bias term.

So far, $Y$ was arbitrarily distributed, but we require an additional restriction to end up with the formulation in Proposition \ref{prop:exp_fam_decomp}.
If we assume that $Y \sim Q = P_\theta$ follows a distribution from the respective exponential family with natural parameter $\theta$, then we have $\nabla A^* \left( \mathbb{E} \left[ T \left( Y \right) \right] \right) = \theta$, which gives in the last line in Equation \eqref{eq:bias_red} that

\begin{equation}
    d_{H^*, S^{-1}} \left( \ln \frac{\mathrm{d} Q}{\mathrm{d} \mu}, \mathbb{E} \left[ \ln \frac{\mathrm{d} P_{\hat{\theta}}}{\mathrm{d} \mu} \right] \right) = d_A \left( \theta, \mathbb{E} \left [ \hat{\theta} \right] \right) + \mathbb{E} \left[ \ln h \left( Y \right) \right] + H \left( Q \right) - A^* \left( \nabla A \left( \theta \right) \right).
\end{equation}

\subsection{Proof of Corollary \ref{cor:classifnll}}
\label{sec:proof_classifnll}

We now provide proof for the closed-form decomposition of the classification log-likelihood.
Since it corresponds to the log-likelihood for the categorical distribution (an exponential family), we can directly derive it from Proposition \ref{prop:exp_fam_decomp}.

For the categorical distribution with $k$ classes, we have for $\theta \in \Theta = \mathbb{R}^{k-1}$ the log-partition $A \left( \theta \right) = \ln \left( 1 + \sum_i^{k-1} \exp \theta_i \right)$ and $h \equiv 1$.
The gradient is $\nabla A \left( \theta \right) = \frac{1}{1 + \sum_i^{k-1} \exp \theta_i} \left( \exp \theta_1, \dots, \exp \theta_{k-1} \right)^\intercal$ 
Further, we have for $\theta^* \in \Theta^* = \left\{ \left( p_1, \dots, p_{k-1} \right)^\intercal \mid p_1, \dots, p_{k-1} \in \left(0, 1 \right), \sum_i p_i < 1 \right\}$ the convex conjugate $A^* \left( \theta^* \right) = \left( 1 - \sum_{i=1}^{k-1} \theta^*_i \right) \ln \left( 1 - \sum_{i=1}^{k-1} \theta^*_i \right) + \sum_{i=1}^{k-1} \theta^*_i \ln \theta^*_i$ with $\nabla A^* \left( \theta^* \right) = \left( \ln \frac{\theta^*_1}{1 - \sum_{i=1}^{k-1} \theta^*_i }, \dots, \ln \frac{\theta^*_{k-1}}{1 - \sum_{i=1}^{k-1} \theta^*_i } \right)^\intercal$.

Further, we will relate each $\theta \in \mathbb{R}^{k-1}$ to an equivalence class 
\begin{equation}
    \left[ \theta \right] \coloneqq \left\{ z \in \mathbb{R}^k \mid z_1 = \theta_1 + z_k, \dots, z_{k-1} = \theta_{k-1} + z_k \right\} = \left\{ z \in \mathbb{R}^k \mid \mathrm{sm} \left( \left( \theta_1, \dots, \theta_{k-1}, 0 \right)^\intercal \right) = \mathrm{sm} \left( z \right) \right\}.
\end{equation}
All members of an equivalence class give the same softmax output.
Now, for any $\theta, \hat{\theta} \in \Theta$ and $z \in \left[ \theta \right], \hat{z} \in \left[ \hat{\theta} \right]$, it holds that

\begin{equation}
\begin{split}
    \mathbb{B}_A \left[ \hat{\theta} \right] & = \mathbb{E} \left[ A \left( \hat{\theta} \right) \right] - A \left( \mathbb{E} \left[ \hat{\theta} \right] \right) \\
    & = \mathbb{E} \left[ \ln \left( 1 + \sum_{i=1}^{k-1} \exp \hat{\theta}_i \right) \right] - \ln \left( 1 + \sum_{i=1}^{k-1} \exp \mathbb{E} \left[ \hat{\theta}_i \right] \right) \\
    & = \mathbb{E} \left[ \ln \left( 1 + \sum_{i=1}^{k-1} \exp \hat{\theta}_i \right) \right] - \ln \left( 1 + \sum_{i=1}^{k-1} \exp \mathbb{E} \left[ \hat{\theta}_i \right] \right) + \mathbb{E} \left[ \ln \exp \hat{z}_k \right] - \ln \exp \mathbb{E} \left[ \hat{z}_k \right] \\
    & = \mathbb{E} \left[ \ln \left( \exp \hat{z}_k + \sum_{i=1}^{k-1} \exp \left( \hat{\theta}_i + \hat{z}_k \right) \right) \right] - \ln \left( \exp \mathbb{E} \left[ \hat{z}_k \right] + \sum_{i=1}^{k-1} \exp \mathbb{E} \left[ \hat{\theta}_i + \hat{z}_k \right] \right) \\
    & = \mathbb{E} \left[ \ln \sum_{i=1}^k \exp \hat{z}_i \right] - \ln \sum_{i=1}^k \exp \mathbb{E} \left[ \hat{z}_i \right] \\
    & = \mathbb{B}_{\mathrm{LSE}} \left[ \hat{z} \right].
\label{eq:BLSE}
\end{split}
\end{equation}

For the bias term, it holds that

\begin{equation}
\begin{split}
    d_A \left( \theta, \mathbb{E} \left[ \hat{\theta} \right] \right) \overset{\text{def}}&{=} \ln \left( 1 + \sum_{i=1}^{k-1} \exp \mathbb{E} \left[ \hat{\theta}_i \right] \right) - \ln \left( 1 + \sum_{i=1}^{k-1} \exp \theta_i \right) - \sum_{i=1}^{k-1} \frac{\exp \theta_i}{1 + \sum_{j=1}^{k-1} \exp \theta_j} \left( \mathbb{E} \left[ \hat{\theta}_i \right] - \theta_i \right) \\
    & = \ln \left( 1 + \sum_{i=1}^{k-1} \exp \mathbb{E} \left[ \hat{z}_i - \hat{z}_k \right] \right) - \ln \sum_{i=1}^{k} \exp z_i - \sum_{i=1}^{k-1} \frac{\exp z_i}{\sum_{j=1}^{k} \exp z_j} \left( \mathbb{E} \left[ \hat{z}_i - \hat{z}_k \right] - z_i \right) \\
    & = \ln \sum_{i=1}^{k} \exp \mathbb{E} \left[ \hat{z}_i \right] - \mathbb{E} \left[ \hat{z}_k \right] - \ln \sum_{i=1}^{k} \exp z_i - \sum_{i=1}^{k} \mathrm{sm}_i \left( z \right) \left( \mathbb{E} \left[ \hat{z}_i \right] - z_i \right) + \underbrace{\sum_{i=1}^k \mathrm{sm}_i \left( z \right)}_{=1} \mathbb{E} \left[ \hat{z}_k \right] \\
    & = \ln \sum_{i=1}^{k} \exp \mathbb{E} \left[ \hat{z}_i \right] - \ln \sum_{i=1}^{k} \exp z_i - \sum_{i=1}^{k} \mathrm{sm}_i \left( z \right) \left( \mathbb{E} \left[ \hat{z}_i \right] - z_i \right) \\
    & = \mathrm{LSE}  \left( \mathbb{E} \left[ \hat{z} \right] \right) - \mathrm{LSE} \left( z \right) - \left\langle \nabla \mathrm{LSE} \left( z \right), \mathbb{E} \left[ \hat{z} \right] - z \right\rangle \\
    \overset{\text{def}}&{=} d_{\mathrm{LSE}} \left( z, \mathbb{E} \left[ \hat{z} \right] \right).
\label{eq:dLSE}
\end{split}
\end{equation}

For $i \in \left\{1, \dots, k \right\}$, we use the probability mass function $Q_i \coloneqq \frac{\mathrm{d}Q}{\mathrm{d}\mu} \left( i \right)$ of the distribution $Q$ with counting measure $\mu$ for shorter notations.
Last, the noise term gives

\begin{equation}
\begin{split}
    - A^* \left( \nabla A \left( \theta \right) \right) = - A^* \left( \left(Q_1, \dots, Q_{k-1} \right)^\intercal \right) = - \sum_{i=1}^{k-1} Q_i \ln Q_i - \left( 1 - \sum_{i=1}^{k-1} Q_i \right) \ln \left( 1 - \sum_{i=1}^{k-1} Q_i \right) = H \left( Q \right).
\label{eq:HLSE}
\end{split}
\end{equation}

Let $\mathrm{sm}^{-1} \left( p \right) \coloneqq \left( \ln \frac{p_1}{p_k}, \dots, \ln \frac{p_{k-1}}{p_k}, 0 \right)^\intercal$ for a probability vector $p$.
Further, let $Q$ have the natural parameter vector $\theta$, which gives $Q = \mathrm{sm} \left( z \right)$ for $z \in \left[ \theta \right]$ and $\mathrm{sm}^{-1} \left( Q \right) \in \left[ \theta \right]$.
Using the Equations \eqref{eq:BLSE}, \eqref{eq:dLSE}, and \eqref{eq:HLSE} with Corollary \ref{cor:classifnll}, we then receive for $Y \sim Q$ and $z \in \left[ \theta \right], \hat{z} \in \left[ \hat{\theta} \right]$

\begin{equation}
\begin{split}
    \mathbb{E} \left[ - \ln \mathrm{sm}_Y \left( \hat{z} \right) \right]
    & = \mathbb{E} \left[ - \ln p_{\hat{\theta}} \left( Y \right) \right] \\
    \overset{\text{Cor }\ref{cor:classifnll}}&{=} - A^* \left( \nabla A \left( \theta \right) \right) - \mathbb{E} \left[ \ln h \left( Y \right) \right] + d_A \left( \theta, \mathbb{E} \left[ \hat{\theta} \right] \right) + \mathbb{B}_{A} \left[ \hat{\theta} \right] \\
    & = H \left( \mathrm{sm} \left( z \right) \right) - \mathbb{E} \left[ \ln 1 \right] + d_{\mathrm{LSE}} \left( z, \mathbb{E} \left[ \hat{z} \right] \right) + \mathbb{B}_{\mathrm{LSE}} \left[ \hat{z} \right] \\
    & = H \left( Q \right) + d_{\mathrm{LSE}} \left( \mathrm{sm}^{-1} \left( Q \right), \mathbb{E} \left[ \hat{z} \right] \right) + \mathbb{B}_{\mathrm{LSE}} \left[ \hat{z} \right]. \\
\end{split}
\end{equation}

\subsection{Proof of Proposition \ref{prop:BI_props}}
\label{sec:proof_BI_probs}

We prove each property in the following.
The arguments are constructed in a generality such that the functional case is always covered.

\subsubsection{General law of total variance}

Let $\phi \colon U \to \mathbb{R}$ be a convex function on a convex subset $U$ of a vector space.
This includes the case of a vector space consisting of functions.
Assume that $X$ and $Y$ are random variables, where $X$ has observations in $U$. If $\mathbb{E} \left[ \mathbb{E} \left[ \phi \left( X \right) \mid Y \right] \right]$ exists, then by Tonelli's theorem and Jensen's inequality the other integrals in the following also exist and we have

\begin{equation}
\begin{split}
    & \mathbb{E} \left[ \mathbb{B}_\phi \left[ X \mid Y \right] \right] + \mathbb{B}_\phi \left[ \mathbb{E} \left[ X \mid Y \right] \right] \\
    & = \mathbb{E} \left[ \mathbb{E} \left[ \phi \left( X \right) \mid Y \right] - \phi \left( \mathbb{E} \left[ X \mid Y \right] \right) \right] + \mathbb{E} \left[ \phi \left( \mathbb{E} \left[ X \mid Y \right] \right) \right] - \phi \left( \mathbb{E} \left[ \mathbb{E} \left[ X \mid Y \right] \right] \right) \\
    & = \mathbb{E} \left[ \mathbb{E} \left[ \phi \left( X \right) \mid Y \right] \right] - \phi \left( \mathbb{E} \left[ \mathbb{E} \left[ X \mid Y \right] \right] \right) \\
    & = \mathbb{E} \left[ \phi \left( X \right) \right] - \phi \left( \mathbb{E} \left[ X \right] \right) \\
    & = \mathbb{B}_\phi \left[ X \right].
\end{split}
\end{equation}

\subsubsection{Proof of Equation \ref{eq:nv_ens}}
\label{sec:proof_nv_ens}

Let $\phi$ be a convex function in a vector space and $X_1, \dots, X_{2^n}$ i.i.d. random variables such that $\mathbb{E} \left[ \phi \left( X_1 \right) \right]$ exists.
Since $\phi \left( \mathbb{E} \left[ 2^{-n+1} \sum_{i=1}^{2^{n-1}} X_i \right] \right) = \phi \left( \mathbb{E} \left[ 2^{-n} \sum_{i=1}^{2^n} X_i \right] \right)$ due to i.i.d. assumption, we only have to show $\mathbb{E} \left[ \phi \left( 2^{-n} \sum_{i=1}^{2^n} X_i \right) \right] < \mathbb{E} \left[ \phi \left( 2^{-n+1} \sum_{i=1}^{2^{n-1}} X_i \right) \right]$.
We do this by using Jensen's inequality for strict convexity:

\begin{equation}
\begin{split}
    \mathbb{E} \left[ \phi \left( 2^{-n} \sum_{i=1}^{2^n} X_i \right) \right] & = \mathbb{E} \left[ \phi \left( \frac{1}{2} 2^{-n+1} \sum_{i=1}^{2^{n-1}} X_i + \frac{1}{2} 2^{-n+1} \sum_{i=2^{n-1} + 1}^{2^n} X_i \right) \right] \\
    & < \mathbb{E} \left[ \frac{1}{2} \phi \left( 2^{-n+1} \sum_{i=1}^{2^{n-1}} X_i \right) + \frac{1}{2} \phi \left( 2^{-n+1} \sum_{i=2^{n-1} + 1}^{2^n} X_i \right) \right] \\
    & = \frac{1}{2} \mathbb{E} \left[ \phi \left( 2^{-n+1} \sum_{i=1}^{2^{n-1}} X_i \right) \right] + \frac{1}{2} \mathbb{E} \left[ \phi \left( 2^{-n+1} \sum_{i=2^{n-1} + 1}^{2^n} X_i \right) \right] \\
    \overset{\mathrm{iid}}&{=} \mathbb{E} \left[ \phi \left( 2^{-n+1} \sum_{i=1}^{2^{n-1}} X_i \right) \right].
\end{split}
\end{equation}
 
In combination with the definition of Bregman Information follows the statement in Equation \ref{eq:nv_ens}.

\subsubsection{Limit case}

Let $\phi$ and $X_1, \dots, X_n$ be defined as in the previous proof with finite mean $\mathbb{E} \left[ X_1 \right]$.
Additionally, $\phi$ is almost surely continuous.
Due to the definition of Bregman Information, we only have to show that

\begin{equation}
    \lim_{n \to \infty} \phi \left( \frac{1}{n} \sum_{i=1}^n X_i \right) \overset{\text{a.s.}}{=} \phi \left( \mathbb{E} \left[ X_1 \right] \right).
\label{eq:phi_mean_conv_as}
\end{equation}

Theorem 8.32 in \citep{capinski2004measure} gives $\lim_{n \to \infty} \frac{1}{n} \sum_{i=1}^n X_i \overset{\text{a.s.}}{=} \mathbb{E} \left[ X_1 \right]$.
Note that we have in general for any random variable $X$ with finite mean that $\left\{ \omega \in \Omega \mid X \left( \omega \right) = \mathbb{E} \left[ X \right] \right\} \subset \left\{ \omega \in \Omega \mid \phi \left( X \left( \omega \right) \right) = \phi \left( \mathbb{E} \left[ X \right] \right) \right\}$.
It follows with the initial conditions that
\begin{equation}
\begin{split}
    1 & = \mathbb{P} \left( \left\{ \omega \in \Omega \mid \lim_{n \to \infty} \frac{1}{n} \sum_{i=1}^n X_i \left( \omega \right) = \mathbb{E} \left[ X_1 \right] \right\} \right) \\
    & \leq \mathbb{P} \left( \left\{ \omega \in \Omega \mid \phi \left( \lim_{n \to \infty} \frac{1}{n} \sum_{i=1}^n X_i \left( \omega \right) \right) = \phi \left( \mathbb{E} \left[ X_1 \right] \right) \right\} \right) \\
    & = \mathbb{P} \left( \left\{ \omega \in \Omega \mid \lim_{n \to \infty} \phi \left( \frac{1}{n} \sum_{i=1}^n X_i \left( \omega \right) \right) = \phi \left( \mathbb{E} \left[ X_1 \right] \right) \right\} \right) \leq 1.    
\end{split}
\end{equation}

Consequently, Equation \eqref{eq:phi_mean_conv_as} holds and with it the statement $\lim_{n \to \infty} \mathbb{B}_\phi \left[ \frac{1}{n} \sum_{i=1}^n X_i \right] \overset{\text{a.s.}}{=} 0$.

\section{EXTENDED EXPERIMENTS}
\label{app:exp}

In this section, we give additional details to the experiments in the main paper, and also provide further results of extended experiments.
In Section \ref{sec:more_sims}, we conduct additional simulation studies to compare common classifiers in terms of their Bregman Information similar to Figure \ref{fig:BI_toy_0} and \ref{fig:BI_approx_ext}.
Further, we investigate our proposed Bregman Information threshold algorithm in more detail on CIFAR-10 (-C) and ImageNet (-C) in Section \ref{sec:exp_ext}.
We also showcase even stronger performance gains of our approach when using the negative log-likelihood for comparison instead of the classification accuracy.

\subsection{Simulations of Toy Tasks for Common Classifiers}
\label{sec:more_sims}

As already mentioned in Section \ref{sec:exp}, we compare a neural network with the classifiers k-nearest neighbors, Support Vector Machine, Decision Tree,
Random Forest, XGBoost, Naive Bayes, and a neural network.
The neural network is implemented via PyTorch \citep{NEURIPS2019_9015}.
It has a single hidden layer and 100 nodes. 
It is trained with the log-likelihood as criterion, the Adam optimizer provided by PyTorch, and early stopping (we split off 30\% of the training set).
For the other classifiers, we use the implementations from Scikit-Learn \citep{scikit-learn}.
The hyperparameters are the following.
The k-nearest neighbors uses $k=5$, the SVM classifier uses $C=1$ and $\gamma = 2$, the gaussian process classifier uses the RBF kernel.
For Random Forests and XGBoost, we use an ensemble size of ten.
The naive bayes classifer uses a gaussian assumption.
All the other hyperparameters are defaults by Scikit-Learn.

\begin{figure*}[t]
\vspace{.3in}
\centerline{\includegraphics[width=\linewidth]{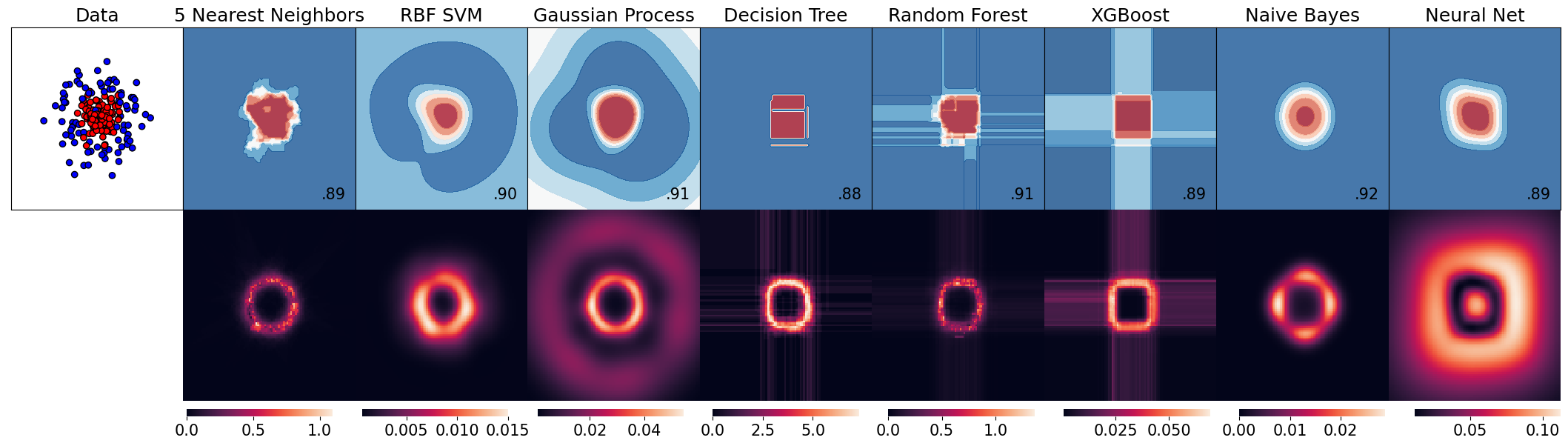}}
\vspace{.3in}
\caption{
\textbf{Top}: Several classifiers are trained on a simulated circular task and their predictions are shown around the input space. The number in the bottom right corner is the accuracy.
\textbf{Bottom}: The Bregman Information of these classifiers is estimated based on several training runs for the identical space.
}
\label{fig:BI_toy_1}
\end{figure*}

\begin{figure*}[t]
\vspace{.3in}
\centerline{\includegraphics[width=\linewidth]{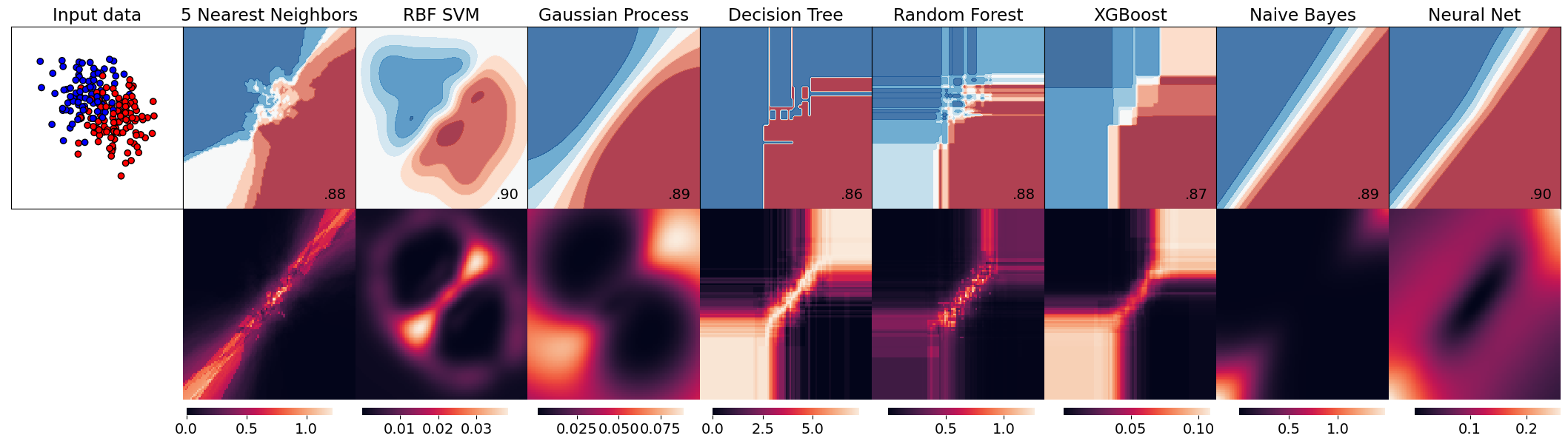}}
\vspace{.3in}
\caption{
\textbf{Top}: Several classifiers are trained on a simulated linear task and their predictions are shown around the input space. The number in the bottom right corner is the accuracy.
\textbf{Bottom}: The Bregman Information of these classifiers is estimated based on several training runs for the identical space.
}
\label{fig:BI_toy_2}
\end{figure*}

The simulated data sets have 300 train instances, and 200 test instances.
We construct two more toy tasks: One of circular shape with closed decision boundary, the other of linear shape.
The results are depicted in Figure \ref{fig:BI_toy_1} and \ref{fig:BI_toy_2}.
The Bregman Information in the main paper and these figures are based on 64 training set samples.
As can be seen, 
SVMs and Gaussian Processes can indicate where the training distribution ends, while the BI of other classifiers such as KNN only identifies the direction of the decision boundary.
This might make SVMs and Gaussian Processes a potential tool for out-of-domain detection for low-dimensional data.

Surprisingly, the neural network shows its lowest uncertainty around the decision boundary.
Even in areas, where are sufficiently enough data samples of a class, the neural network shows uncertainty where other classifiers do not.
We hypothesis a possible reason for this might be that neural networks are optimized via gradient descent and the log-likelihood, which requires anchor points of both classes for a stable convergence.
Around areas with instances of only a single class, gradient descent is missing an anchor and does not 'know' how far to fit the model towards this class.
At first, this might discourage using Bregman Information for out-of-domain detection at high-dimensional tasks, such as image data, fitted with a neural network.
But, the traversing of the decision boundary from in-domain to out-of-domain still gives gives the highest Bregman Information of the neural network in our simulations.
Consequently, in the high-dimensional setting, if most data instances lie on the decision boundary and the decision boundary is 'open' in a variety of directions, we might still receive sufficient indication of in- and out-of-domain areas in the input space.
Our results in Section \ref{sec:exp} and Section \ref{sec:exp_ext} support this hypothesis.

Similar to Figure \ref{fig:BI_approx}, we provide the same approximations and MC Dropout for additional toy tasks in Figure \ref{fig:BI_approx_ext}.
In all cases, for the Deep Ensemble \citep{lakshminarayanan2017simple} we use 64 models, for MC Dropout \citep{gal2016dropout} an ensemble size of 5000, and for the 'real' BI we use 64 training set samples.
Again, the results in Section \ref{sec:exp} and Section \ref{sec:exp_ext} support that the low-dimensional findings hold to some degree for real-world image data.

\begin{figure*}
\vskip 0.2in
\centering
    \begin{subfigure}{.7\textwidth}
    \centering
    \includegraphics[width=\columnwidth]{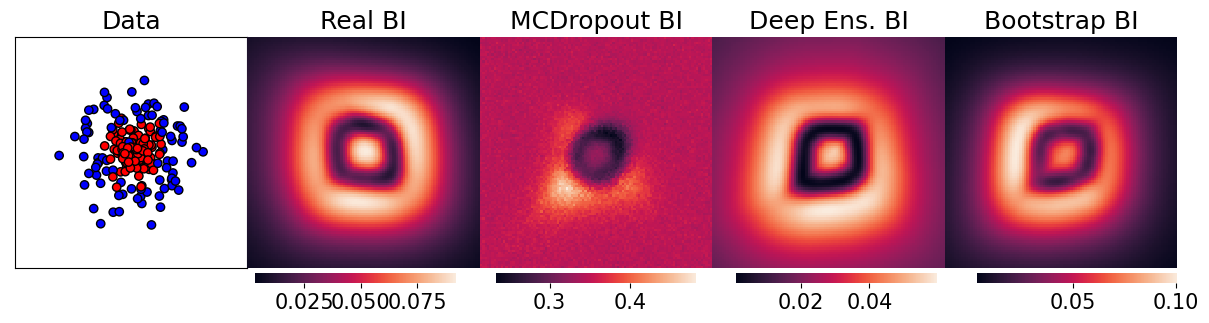}
    \caption{Circular task}
    \end{subfigure} \\ %
    \begin{subfigure}{.7\textwidth}
    \centering
    \includegraphics[width=\columnwidth]{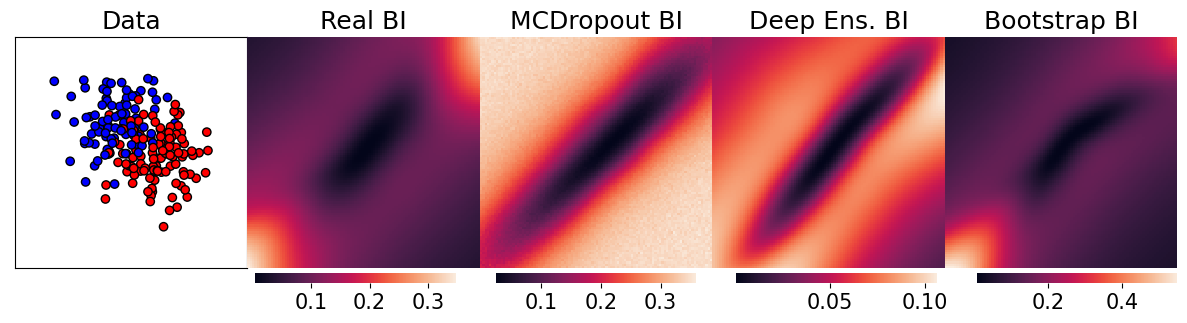}
    \caption{Linear task}
    \end{subfigure}
\caption{Different approximations of the Bregman Information for a neural network. 'Real BI' refers to the approximation via training set samples from the real distribution. The other approximations are only with respect to a single training set.}
\vskip -0.2in
\label{fig:BI_approx_ext}
\end{figure*}

\subsection{Additional Out-of-Distribution Results on CIFAR-10 and ImageNet, and Further Details}
\label{sec:exp_ext}

In this section, we provide further results for uncertainty thresholds in the out-of-distribution setting of CIFAR-10 \citep{krizhevsky2009learning} and ImageNet \citep{krizhevsky2009learning}.
We will also discuss further experiment details.

\paragraph{Comparisons via negative log-likelihood instead of accuracy}
The log-likelihood is a proper score and as such a measure of predictive uncertainty.
It captures the correctness of a predicted probability instead of only the correctness of the predicted class, like accuracy.
Consequently, the log-likelihood indicates how trustworthy confidence scores are.
We conduct similar experiments as in the main paper but replace the accuracy with the log-likelihood.
The results can be seen in Figure \ref{fig:cif_duo_nll_comp}.
The performance improvement of Bregman Information with Deep Ensembles for out-of-domain instances is substantial compared to Confidence scores.

\begin{figure*}
\vskip 0.2in
\centering
\begin{subfigure}{.5\textwidth}
    \centering
    \includegraphics[width=\columnwidth]{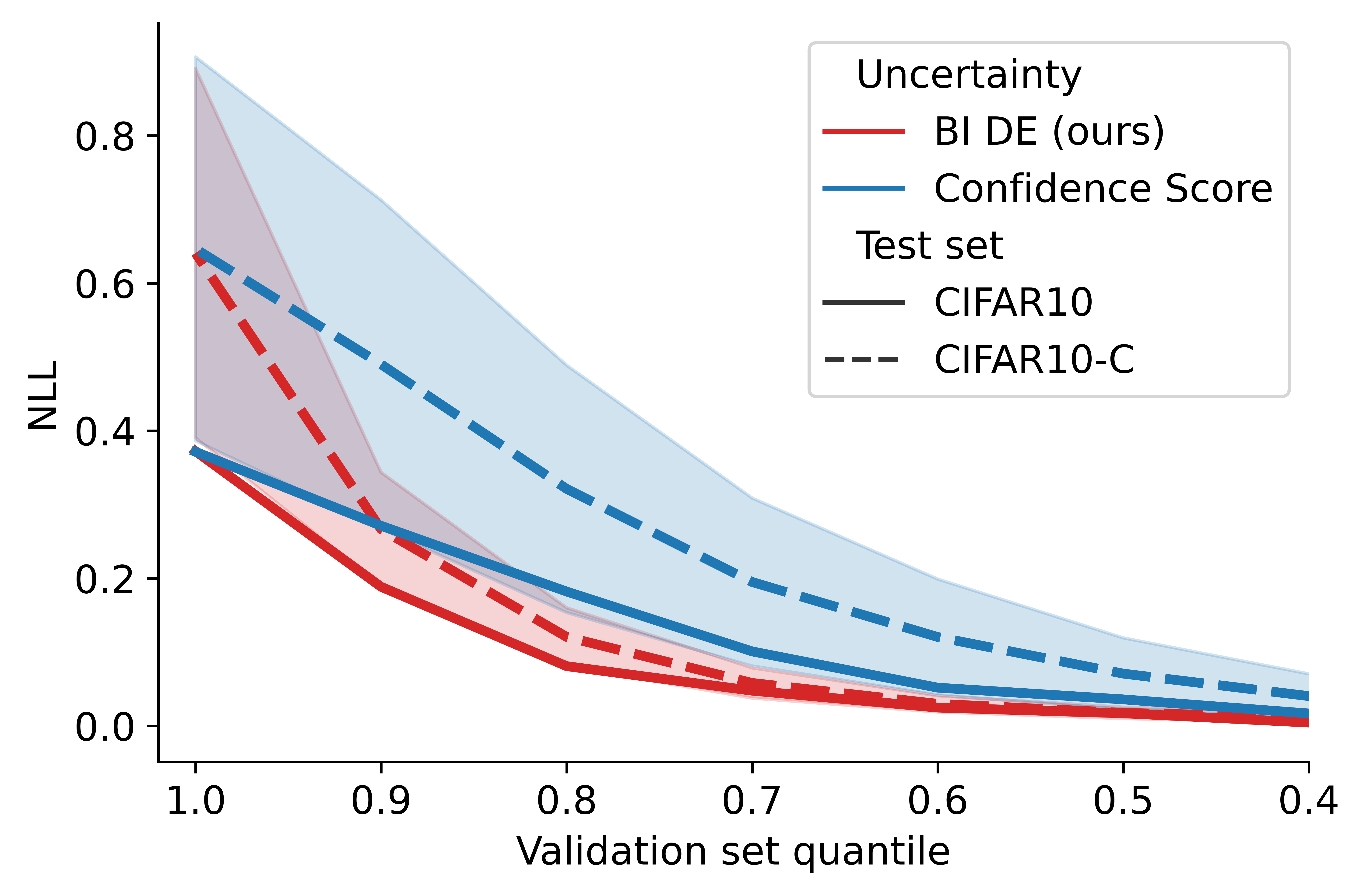}
    \caption{Combining all severities.}
    \end{subfigure}%
    \begin{subfigure}{.5\textwidth}
    \centering
    \includegraphics[width=\columnwidth]{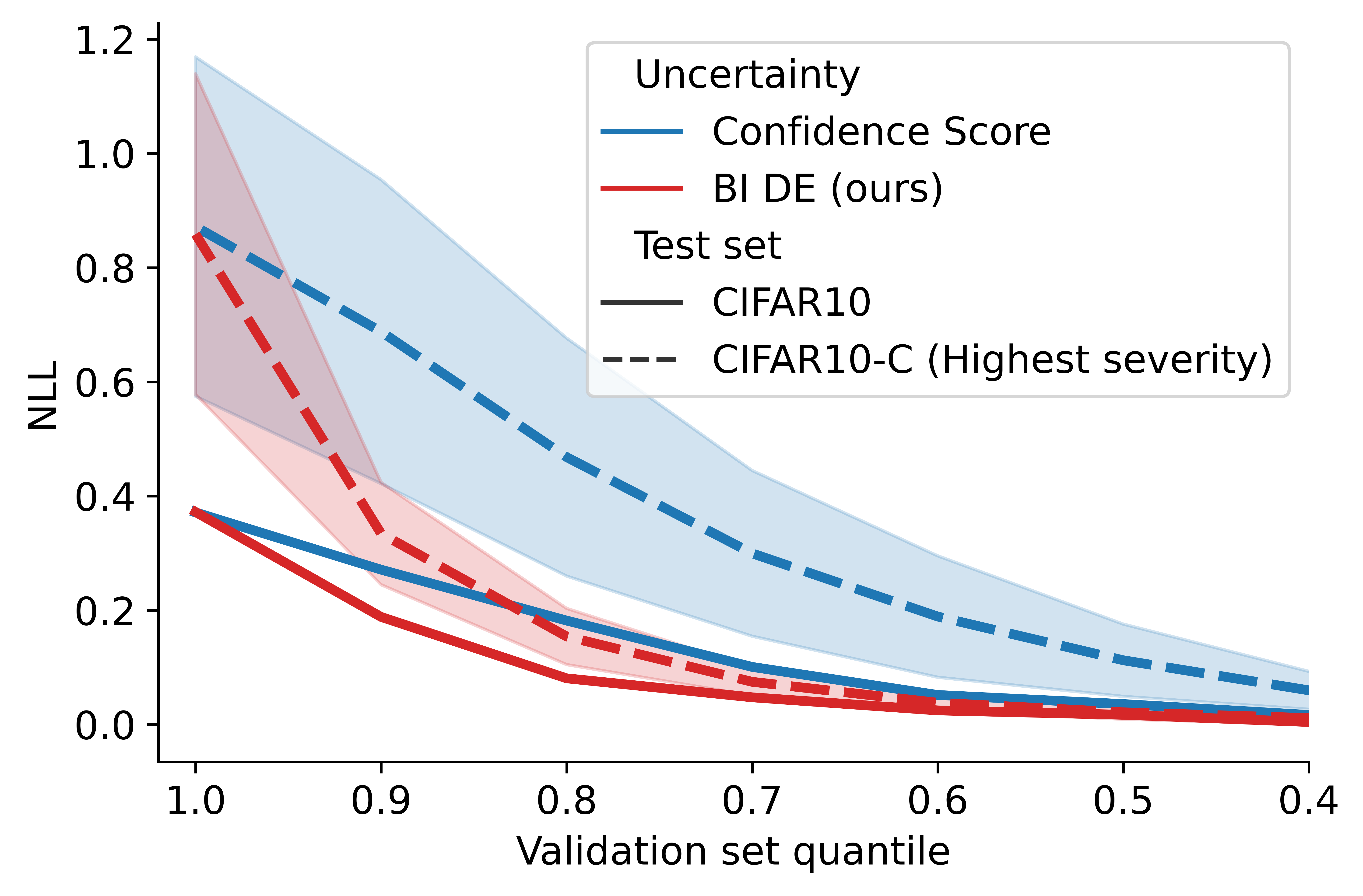}
    \caption{Only highest severity.}
    \end{subfigure}%
\caption{Negative Log-Likelihood after discarding test instances with high
levels of uncertainty for CIFAR-10 and CIFAR-10-C. Fewer
samples have to be discarded to reach better NLL when
using the Bregman Information as uncertainty measure.}
\vskip -0.2in
\label{fig:cif_duo_nll_comp}
\end{figure*}

\paragraph{Datasets}

To compare in-domain with out-of-domain performance, we use corrupted versions of the test sets introduced in  \citep{hendrycks2019robustness}.
The test sets CIFAR-10-C and ImageNet-C have 5 different severities for 20 different corruptions: Brightness, fog, glass blur, pixelate, spatter, contrast, frost, impulse noise, saturate, speckle noise, defocus blur, gaussian blur, jpeg compression, shot noise, zoom blur, elastic transform, gaussian noise, motion blur, and snow.
For CIFAR-C-10, we have 10000 test instances per corruption per severity.
We have to remove 10000 instances in each ImageNet-C corruption severity, which are corruptions of our validation set, leaving us 40000 test instances per corruption per severity.

\begin{figure*}
\vskip 0.2in
\centering
    \begin{subfigure}{.5\textwidth}
    \centering
    \includegraphics[width=\columnwidth]{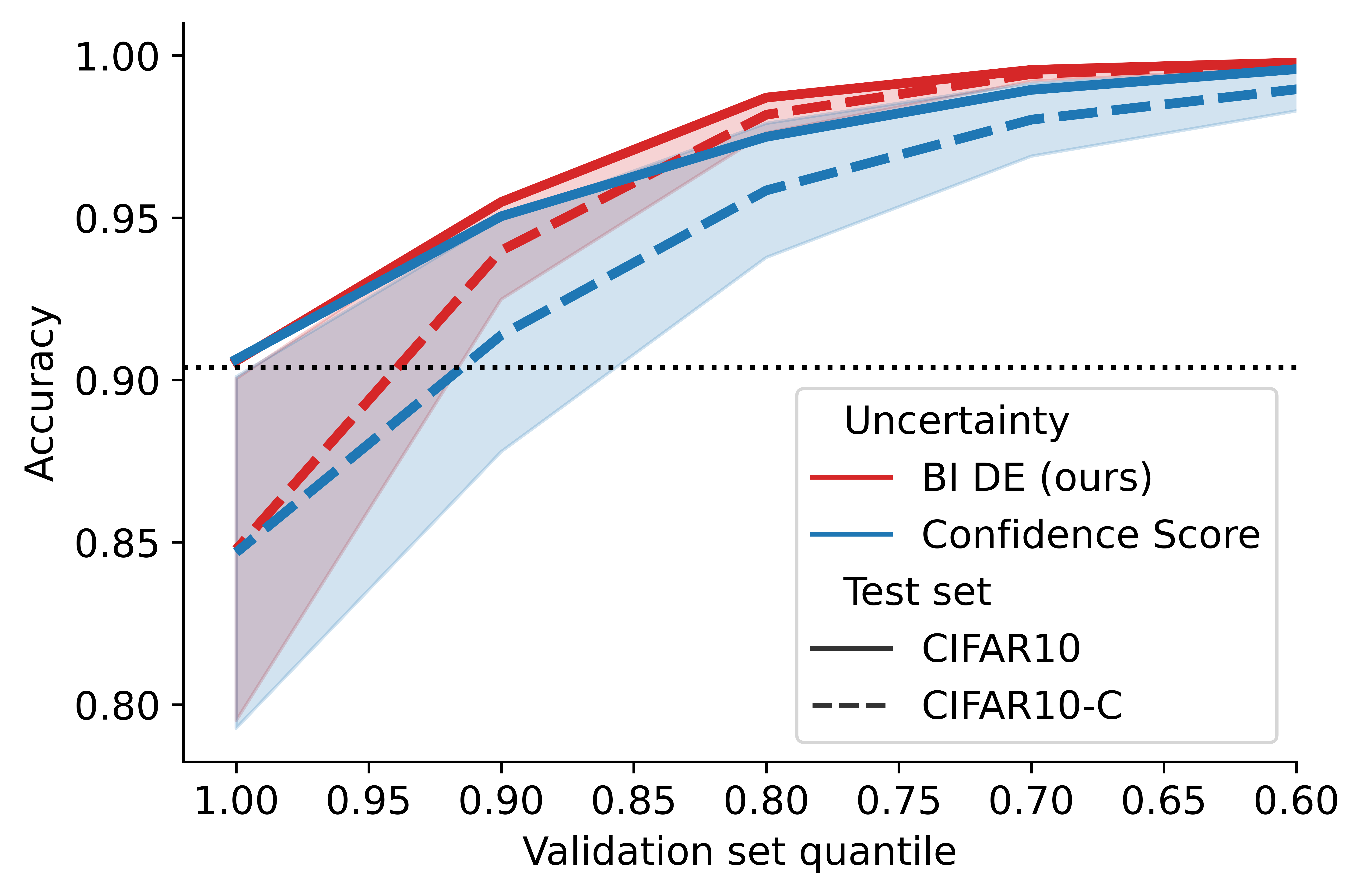}
    \caption{Deep Ensembles (all Severities)}
    \end{subfigure}%
    \begin{subfigure}{.5\textwidth}
    \centering
    \includegraphics[width=\columnwidth]{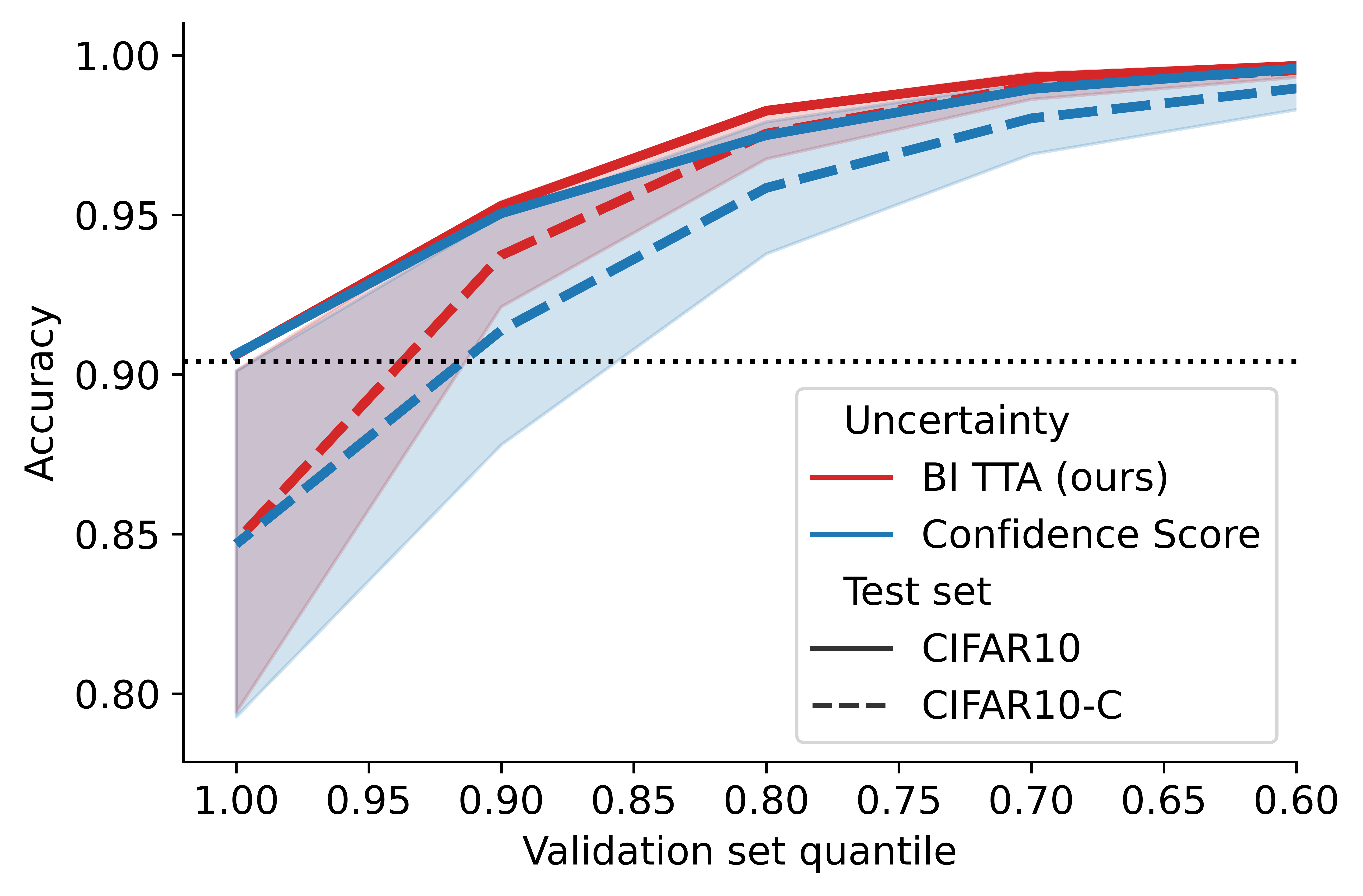}
    \caption{Test-Time Augmentation (all Severities)}
    \end{subfigure} \\
    \begin{subfigure}{.5\textwidth}
    \centering
    \includegraphics[width=\columnwidth]{figures/resnet_Cifar10-C_quantile_hueUncertainty_sev5_uncDE_Accuracy.png}
    \caption{Deep Ensembles (only Severity 5)}
    \end{subfigure}%
    \begin{subfigure}{.5\textwidth}
    \centering
    \includegraphics[width=\columnwidth]{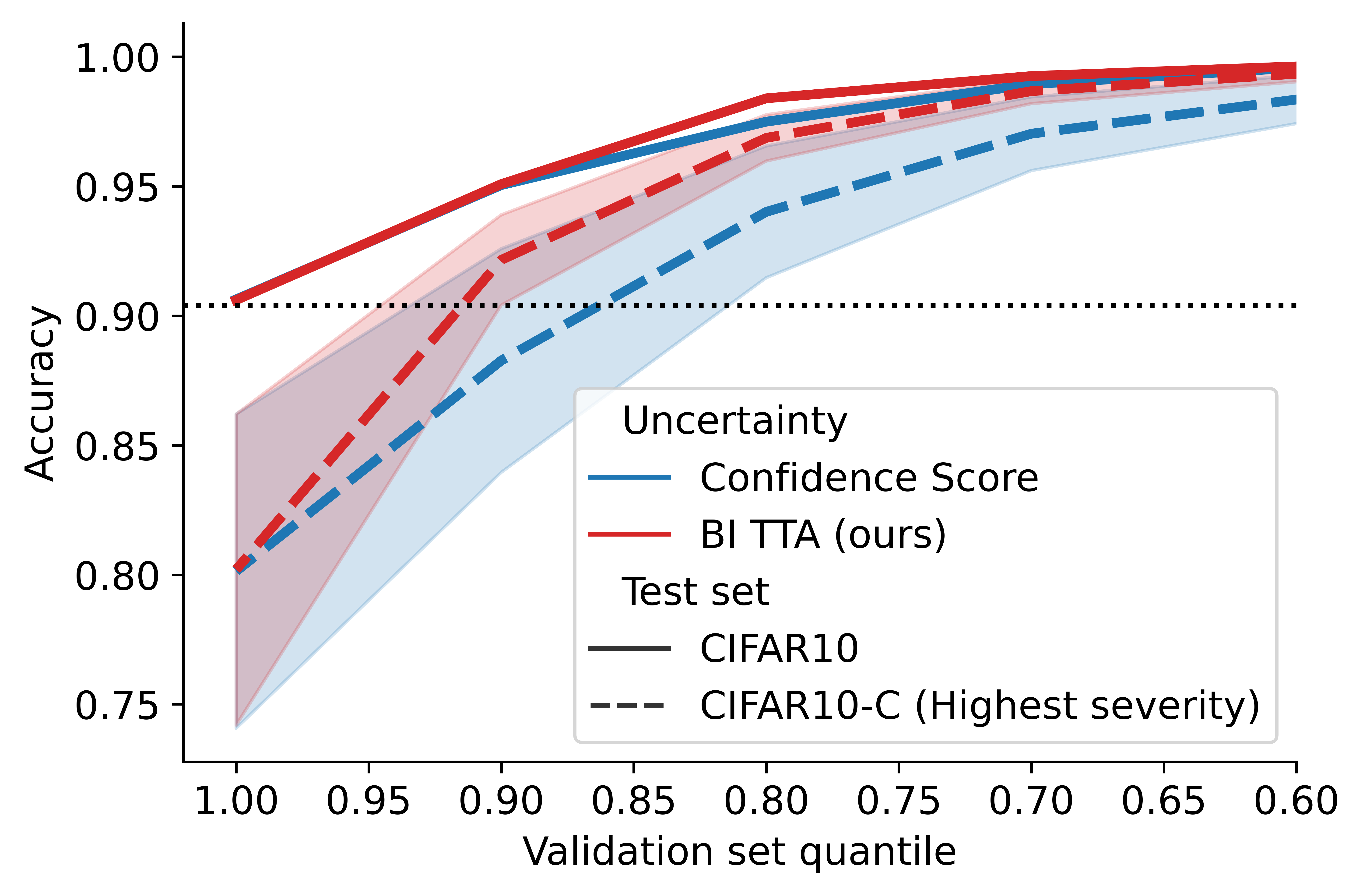}
    \caption{Test-Time Augmentation (only Severity 5)}
    \end{subfigure}%
    
\caption{Accuracy after discarding test instances with high
levels of uncertainty for CIFAR-10 and CIFAR-10-C. Fewer
samples have to be discarded to reach better accuracy when
using the Bregman Information as uncertainty measure.}
\vskip -0.2in
\label{fig:all_cor_comp}
\end{figure*}

\paragraph{Models and ensembles}

For classification on CIFAR-10, we use a ResNet20 trained with Adam and early stopping \citep{he2016deep} based on the PyTorch framework.
We train the Deep Ensemble of size 10 by training the same architecture with different weight initializations.

For ImageNet, we use ResNet50 models downloaded from \citep{ashukha2020pitfalls}.\footnote{https://github.com/SamsungLabs/pytorch-ensembles}
We also use an Deep Ensemble of size 10.

Further, we use the ensembling technique Test-Time Augmentation \citep{wang2019aleatoric}.
The augmentations are Random Crop, Random Flip, and we use an ensemble size of 20.

Figure \ref{fig:all_cor_comp} shows similar results for TTA as Figure \ref{fig:ood_cif10} in the main paper.
Further, our approach still dominates when we include all corruption severities.
Note that the deviation bounds are smaller, which indicates that BI is more robust for different types of corruptions.

\paragraph{Uncertainty threshold algorithm for Confidence scores}
We also provide the Algorithm \ref{alg:BI} adjusted to Confidence scores.
It is described in Algorithm \ref{alg:Conf}.
The only difference is that we are not using an ensemble anymore and we flip the threshold, since higher confidence means less uncertainty, while higher BI means lower uncertainty.

\begin{algorithm}
\caption{Classifying with uncertainty threshold via Confidence scores.
}
\label{alg:Conf}
\begin{algorithmic}
\Require Validation set $\mathcal{D}$, model, $q \in \left[0, 1 \right]$, test instance $x^\prime$
\State ConfScores $\gets$ [$\max_i$ model$(x)_i$ for $x \in \mathcal{D}$]
\Comment{Highest predicted probability for each instance}
\State threshold $ \gets $ quantile(ConfScores, q)
\If{$\max_i$ model$(x^\prime)_i <$ threshold}
    \State label as OOD \Comment{Warning in real-world application}
\Else
    \State return model$(x^\prime)$
\EndIf
\end{algorithmic}
\end{algorithm}

\end{document}